\title{A Desirability-Based Axiomatisation for\\Coherent Choice Functions}
\author{Jasper De Bock \and Gert de Cooman}
\institute{Ghent University, ELIS, SYSTeMS\\
\email{\{jasper.debock,gert.decooman\}@ugent.be}}
\DeclarePairedDelimiter{\group}{(}{)}
\DeclarePairedDelimiter{\set}{\{}{\}}
\DeclarePairedDelimiter{\card}{\vert}{\vert}
\newcommand{\naturals}{\mathbb{N}}
\newcommand{\reals}{\mathbb{R}}
\newcommand{\states}{\mathcal{X}}
\renewcommand{\succ}{>}
\renewcommand{\preceq}{\leq}
\newcommand{\opt}[1][]{u_{#1}}
\newcommand{\altopt}[1][]{v_{#1}}
\newcommand{\altopttoo}[1][]{w_{#1}}
\newcommand{\aopt}[1][]{a_{#1}}
\newcommand{\bopt}[1][]{b_{#1}}
\newcommand{\opts}{\mathcal{L}}
\newcommand{\posopts}{\opts_{\succ0}}
\newcommand{\nonposopts}{\opts_{\preceq0}}
\newcommand{\singposopts}{\opts_{\succ0}^{\mathrm{s}}}
\newcommand{\optset}[1][]{A_{#1}}
\newcommand{\altoptset}[1][]{B_{#1}}
\newcommand{\altoptsettoo}[1][]{C_{#1}}
\newcommand{\optsets}{\mathcal{Q}}
\newcommand{\assessment}{\mathcal{A}}
\newcommand{\desirset}[1][]{D_{#1}}
\newcommand{\desirsets}{\mathbf{D}}
\newcommand{\cohdesirsets}{\overline{\desirsets}}
\newcommand{\rejectset}[1][]{K_{#1}}
\newcommand{\maxrejectset}[1][]{\hat{K}_{#1}}
\newcommand{\natexrejectset}{\EX}
\newcommand{\rejectsets}{\mathbf{K}}
\newcommand{\cohrejectsets}{\overline{\rejectsets}}
\newcommand{\maxrejectsets}{\hat{\rejectsets}}
\newcommand{\choicefun}[1][]{C_{#1}}
\newcommand{\rejectfun}[1][]{R_{#1}}
\newcommand{\cset}[3][]{\set[#1]{#2\colon#3}}
\newcommand{\then}{\Rightarrow}
\newcommand{\ifandonlyif}{\Leftrightarrow}
\newcommand{\upset}[1]{{\uparrow\!\!{#1}}}
\DeclareMathOperator{\setposi}{Posi}
\DeclareMathOperator{\SU}{Su}
\DeclareMathOperator{\RN}{Rn}
\DeclareMathOperator{\RS}{Rs}
\DeclareMathOperator{\EX}{Ex}
\begin{document}

\maketitle

\begin{abstract}
Choice functions constitute a simple, direct and very general mathematical framework for modelling choice under uncertainty. 
In particular, they are able to represent the set-valued choices that typically arise from applying decision rules to imprecise-probabilistic uncertainty models. 
We provide them with a clear interpretation in terms of attitudes towards gambling, borrowing ideas from the theory of sets of desirable gambles, and we use this interpretation to derive a set of basic axioms. 
We show that these axioms lead to a full-fledged theory of coherent choice functions, which includes a representation in terms of sets of desirable gambles, and a conservative inference method.
\end{abstract}


\section{Introduction}

When uncertainty is described by probabilities, decision making is usually done by maximising expected utility. 
Except in degenerate cases, this leads to a unique optimal decision.
If, however, the probability measure is only partially specified---for example by lower and upper bounds on the probabilities of specific events---this method no longer works. 
Essentially, the problem is that two different probability measures that are both compatible with the given bounds may lead to different optimal decisions. 
In this context, several generalisations of maximising expected utility have been proposed; see~\cite{troffaes2007} for an nice overview.

A common feature of many such generalisations is that they yield \emph{set-valued choices}: when presented with a set of options, they generally return a subset of them. 
If this turns out to be a singleton, then we have a unique optimal decision, as before. 
If, however, it contains multiple options, this means that they are incomparable and that our uncertainty model does not allow us to choose between them. 
Obtaining a single decision then requires a more informative uncertainty model, or perhaps a secondary decision criterion, as the information present in the uncertainty model does not allow us to single out an optimal option. 
Set-valued choice is also a typical feature of decision criteria based on other uncertainty models that generalise the probabilistic ones to allow for imprecision and indecision, such as lower previsions and sets of desirable gambles.

\emph{Choice functions} provide an elegant unifying mathematical framework for studying such set-valued choice. 
They map option sets to option sets: for any given set of options, they return the corresponding set-valued choice. 
Hence, when working with choice functions, it is immaterial whether there is some underlying decision criterion. 
The primitive objects of this framework are simply the set-valued choices themselves, and the choice function that represents all these choices, serves as an uncertainty model in and by itself.

A major advantage of working with choice functions is that they allow us to impose axioms on choices, aimed at characterising what it means for choices to be rational and internally consistent; see for example the seminal work by Seidenfeld et al.~\cite{seidenfeld2010}. 
Here, we undertake a similar mission, yet approach it from a different angle. 
Rather than think of choice in an intuitive manner, we provide it with a concrete interpretation in terms of attitudes towards gambling, borrowing ideas from the theory of sets of desirable gambles \cite{couso2011:desirable,walley2000,cooman2010,cooman2011b}. 
From this interpretation alone, and nothing more, we develop a theory of coherent choice that includes a full set of axioms, a representation in terms of sets of desirable gambles, and a natural extension theorem. 

\iftoggle{arxiv}{In order to facilitate the reading, proofs and intermediate results have been relegated to the Appendix.}
{Due to length constraints, proofs have been relegated to the appendix of an extended arXiv version of this contribution~\cite{extended}.}

\section{Choice Functions}

A choice function $\choicefun$ is a set-valued operator on sets of options. 
In particular, for any set of options $\optset$, the corresponding value of $\choicefun$ is a subset $\choicefun(\optset)$ of $\optset$. 
The options themselves are typically actions amongst which a subject wishes to choose. 
As is customary in decision theory, every action has a corresponding reward that depends on the state of a variable~$X$, about which the subject is typically uncertain. 
Hence, the reward is uncertain too. 
The purpose of a choice function is to represent our subject's choices between such uncertain rewards.

Let us make this more concrete. 
First of all, the variable~$X$ takes values~$x$ in some set of states~$\states$. 
The reward that corresponds to a given option is then a function $\opt$ on $\states$. 
We will assume that this reward can be expressed in terms of a real-valued linear utility scale, allowing us to identify every option with a real-valued function on $\states$.\footnote{A more general approach, which takes options to be elements of an arbitrary vector space, encompasses the horse lottery approach, and was explored by Van Camp \cite{2017vancamp:phdthesis}. Our results here can be easily extended to this more general framework.}
We take these functions to be bounded and call them \emph{gambles}. 
We use $\opts$ to denote the set of all such gambles and also let
\begin{equation*}
\posopts\coloneqq\cset{\opt\in\opts}{\opt\geq0\text{ and }\opt\neq0}
\text{ and }
\nonposopts\coloneqq\cset{\opt\in\opts}{\opt\leq0}.
\end{equation*}

 Option sets can now be identified with subsets of $\opts$, which we call \emph{gamble sets}. 
 We restrict our attention here to \emph{finite} gamble sets and will use $\optsets$ to denote the set of all such finite subsets of $\opts$, including the empty set.

\begin{definition}[Choice function]\label{def:choicefunction}
A \emph{choice function} $\choicefun$ is a map from $\optsets$ to $\optsets$ such that $\choicefun(\optset)\subseteq\optset$ for every $\optset\in\optsets$. 
\end{definition}

Gambles in $\optset$ that do not belong to $\choicefun(\optset)$ are said to be \emph{rejected}. 
This leads to an alternative representation in terms of so-called rejection functions.

\begin{definition}[Rejection function]\label{def:rejectionfunction}
The \emph{rejection function} $\rejectfun[\choicefun]$ corresponding to a choice function $\choicefun$ is a map from $\optsets$ to $\optsets$, defined by $\rejectfun[\choicefun](\optset)\coloneqq\optset\setminus\choicefun(\optset)$ for all $\optset\in\optsets$.
\end{definition}
\noindent
Since a choice function is completely determined by its rejection function, any interpretation for rejection functions automatically implies an interpretation for choice functions. 
This allows us to focus on the former.

Our interpretation for rejection functions now goes as follows. 
Consider a subject whose uncertainty about $X$ is represented by a rejection function $\rejectfun[\choicefun]$, or equivalently, by a choice function $\choicefun$. 
Then for a given gamble set $\optset\in\optsets$, the statement that a gamble $\opt\in\optset$ is rejected from $\optset$---that is, that $\opt\in\rejectfun[\choicefun](\optset)$---is taken to mean that \emph{there is at least one gamble $\altopt$ in $\optset$ that our subject strictly prefers over $\opt$}.

This interpretation is of course still meaningless, because we have not yet explained the meaning of strict preference. 
Fortunately, that problem has already been solved elsewhere: strict preference between elements of $\opts$ has an elegant interpretation in terms of desirability~\cite{walley2000,quaeghebeur2015:statement}, and it is this interpretation that we intend to borrow here. 
To allow us to do so, we first provide a brief introduction to the theory of sets of desirable gambles.

\section{Sets of Desirable Gambles}\label{sec:SDGs}

A gamble $\opt\in\opts$ is said to be \emph{desirable} if our subject strictly prefers it over the zero gamble, meaning that rather than not gamble at all, she strictly prefers to commit to the gamble where, after the true value $x$ of the uncertain variable $X$ has been determined, she will receive the (possibly negative) reward $\opt(x)$.

A \emph{set of desirable gambles} $\desirset$ is then a subset of $\opts$, whose interpretation will be that it consists of gambles $\opt\in\opts$ that our subject considers desirable. 
The set of all sets of desirable gambles is denoted by $\desirsets$.
In order for a set of desirable gambles to represent a rational subject's beliefs, it should satisfy a number of rationality, or \emph{coherence}, criteria.

\begin{definition}
A set of desirable gambles $\desirset\in\desirsets$ is called \emph{coherent} if it satisfies the following axioms \emph{\cite{couso2011:desirable,cooman2010,cooman2011b,quaeghebeur2015:statement}}:
\begin{enumerate}[label=\textup{D}$_{\arabic*}$.,ref=\textup{D}$_{\arabic*}$,topsep=2pt,leftmargin=*]
\item $0\notin\desirset$;\label{ax:desirs:nozero}
\item\label{ax:desirs:pos} $\posopts\subseteq\desirset$;
\item\label{ax:desirs:cone} if $\opt,\altopt\in\desirset$, $\lambda,\mu\geq0$ and $\lambda+\mu>0$, then $\lambda\opt+\mu\altopt\in\desirset$.
\end{enumerate}
We denote the set of all coherent sets of desirable gambles by $\cohdesirsets$.
\end{definition}
\noindent
Axioms~\ref{ax:desirs:nozero} and~\ref{ax:desirs:pos} follow immediately from the meaning of desirability: zero cannot be strictly preferred to itself, and any gamble that is never negative but sometimes positive should be strictly preferred to the zero gamble. 
Axiom~\ref{ax:desirs:cone} is implied by the assumed linearity of our utility scale.


Every coherent set of desirable gambles $\desirset\in\cohdesirsets$ induces a binary preference order $\succ_{\desirset}$---a strict vector ordering---on $\opts$, defined by $ \opt\succ_{\desirset}\altopt\ifandonlyif\opt-\altopt\in\desirset$, for all $\opt,\altopt\in\opts$.
The intuition behind this definition is that a subject strictly prefers the uncertain reward $\opt$ over $\altopt$ if she strictly prefers trading $\altopt$ for $\opt$ over not trading at all, or equivalently, if she strictly prefers the net uncertain reward $\opt-\altopt$ over the zero gamble. 
The preference order $\succ_{\desirset}$ fully characterises $\desirset$: one can easily see that $\opt\in\desirset$ if and only if $\opt\succ_{\desirset}0$. 
Hence, sets of desirable gambles are completely determined by binary strict preferences between gambles.




\section{Sets of Desirable Gamble Sets}

Let us now go back to our interpretation for choice functions, which is that a gamble $\opt$ in $\optset$ is rejected from $\optset$ if and only if there is some gamble $\altopt$ in $\optset$ that our subject strictly prefers over $\opt$.
We will from now on interpret this preference in terms of desirability: we take it to mean that $v-u$ is desirable.
In this way, we arrive at the following interpretation for a choice function $\choicefun$. 
Consider any $\optset\in\optsets$ and $\opt\in\optset$, then
\begin{equation}\label{eq:choiceintermsofdesir}
\opt\notin\choicefun(\optset)
\ifandonlyif\opt\in\rejectfun[\choicefun](\optset)
\ifandonlyif(\exists\altopt\in\optset)\,\altopt-\opt
\text{~is desirable.}
\end{equation}
In other words, if we let $\optset-\set{\opt}\coloneqq\cset{\altopt-\opt}{\altopt\in\optset}$, then according to our interpretation, the statement that $\opt$ is rejected from $\optset$ is taken to mean that $\optset-\set{\opt}$ contains at least one desirable gamble.

A crucial observation here is that this interpretation does not require our subject to specify a set of desirable gambles.
Instead, all that is needed is for her to specify those gamble sets $\optset\in\optsets$ that to her contain at least one desirable gamble. 
We call such gamble sets \emph{desirable gamble sets} and collect them in a \emph{set of desirable gamble sets} $\rejectset\subseteq\optsets$. 
As can be seen from Equation~\eqref{eq:choiceintermsofdesir}, such a set of desirable gamble sets $\rejectset$ completely determines a choice function $\choicefun$ and its rejection function $\rejectfun[\choicefun]$:
\begin{equation*}\label{eq:choiceintermsofK}
\opt\notin\choicefun(\optset)
\ifandonlyif\opt\in\rejectfun[\choicefun](\optset)
\ifandonlyif\optset-\set{\opt}\in\rejectset,
\text{ for all $\optset\in\optsets$ and $\opt\in\optset$}.
\end{equation*}
The study of choice functions can therefore be reduced to the study of sets of desirable gamble sets. 
We will from now on work directly with the latter.
We will use the collective term \emph{choice models} for choice functions, rejection functions, and sets of desirable gamble sets.

Let $\rejectsets$ denote the set of all sets of desirable gamble sets $\rejectset\subseteq\optsets$, and consider any such~$\rejectset$. 
The first question to address is when to call $\rejectset$ \emph{coherent}: which properties should we impose on a set of desirable gamble sets in order for it to reflect a rational subject's beliefs? We propose the following axiomatisation, using $(\lambda,\mu)>0$ as a shorthand notation for `$\lambda\geq0$, $\mu\geq0$ and $\lambda+\mu>0$'.
\begin{definition}[Coherence]
A set of desirable gamble sets $\rejectset\subseteq\optsets$ is called \emph{coherent} if it satisfies the following axioms:
\begin{enumerate}[label=\textup{K}$_{\arabic*}$.,ref=\textup{K}$_{\arabic*}$,leftmargin=*,topsep=2pt,start=0]
\item\label{ax:rejects:nonempty}
$\emptyset\notin\rejectset$;
\item\label{ax:rejects:removezero}
$\optset\in\rejectset\then\optset\setminus\set{0}\in\rejectset$, for all $\optset\in\optsets$;
\item\label{ax:rejects:pos} $\set{\opt}\in\rejectset$, for all $\opt\in\posopts$;
\item\label{ax:rejects:cone} if $\optset[1],\optset[2]\in\rejectset$ and if, for all $\opt\in\optset[1]$ and $\altopt\in\optset[2]$, $(\lambda_{\opt,\altopt},\mu_{\opt,\altopt})>0$, then
\begin{equation*}
\cset{\lambda_{\opt,\altopt}\opt+\mu_{\opt,\altopt}\altopt}{\opt\in\optset[1],\altopt\in\optset[2]}
\in\rejectset;
\end{equation*}
\item\label{ax:rejects:mono}$\optset[1]\in\rejectset$ and $\optset[1]\subseteq\optset[2]\then\optset[2]\in\rejectset$, for all $\optset[1],\optset[2]\in\optsets$.
\end{enumerate}
We denote the set of all coherent sets of desirable gamble sets by $\cohrejectsets$.
\end{definition}

Since a desirable gamble set is by definition a set of gambles that contains at least one desirable gamble, Axioms~\ref{ax:rejects:nonempty} and~\ref{ax:rejects:mono} are immediate.
The other three axioms follow from the principles of desirability that also lie at the basis of Axioms~\ref{ax:desirs:nozero}--\ref{ax:desirs:cone}: the zero gamble is not desirable, the elements of $\posopts$ are all desirable, and any finite positive linear combination of desirable gambles is again desirable. 
Axioms~\ref{ax:rejects:removezero} and~\ref{ax:rejects:pos} follow naturally from the first two of these principles.
The argument for Axiom~\ref{ax:rejects:cone} is more subtle; it goes as follows. 
Since $\optset[1]$ and $\optset[2]$ are two desirable gamble sets, there must be at least one desirable gamble $\opt\in\optset[1]$ and one desirable gamble $\altopt\in\optset[2]$. 
Since for these two gambles, the positive linear combination $\lambda_{\opt,\altopt}\opt+\mu_{\opt,\altopt}\altopt$ is again desirable, we know that at least one of the elements of $\cset{\lambda_{\opt,\altopt}\opt+\mu_{\opt,\altopt}\altopt}{\opt\in\optset[1],\altopt\in\optset[2]}$ is a desirable gamble. 
Hence, it must be a desirable gamble set. 

\section{The Binary Case}\label{sec:binary}

Because we interpret them in terms of desirability, one might be inclined to think that sets of desirable gamble sets are simply an alternative representation for sets of desirable gambles. 
However, this is not the case: we will see that sets of desirable gamble sets constitute a much more general uncertainty framework than sets of desirable gambles. 
What lies behind this added generality is that it need not be known which gambles are actually desirable. 
For example, within the framework of sets of desirable gamble sets, it is possible to express the belief that at least one of the gambles $\opt$ or $\altopt$ is desirable while remaining undecided about which of them actually is; in order to express this belief, it suffices to state that $\set{\opt,\altopt}\in\rejectset$. 
This is impossible within the framework of sets of desirable gambles.

Any set of desirable gamble sets $\rejectset\in\rejectsets$ determines a unique set of desirable gambles based on its binary choices only, given by
\begin{equation*}
\desirset[\rejectset]\coloneqq\cset{\opt\in\opts}{\set{\opt}\in\rejectset}.
\end{equation*}
That choice models typically represent more than just binary choice is reflected in the fact that different $\rejectset$ can have the same $\desirset[\rejectset]$.
Nevertheless, there are sets of desirable gamble sets $\rejectset\in\rejectsets$ that \emph{are} completely characterised by a set of desirable gambles, in the sense that there is a (necessarily unique) set of desirable gambles $\desirset\in\desirsets$ such that $\rejectset=\rejectset[\desirset]$, with
\begin{equation\iftoggle{arxiv}{}{*}}\label{eq:choicefromdesir}
\rejectset[\desirset]
\coloneqq\cset{\optset\in\optsets}{\optset\cap\desirset\neq\emptyset}.
\end{equation\iftoggle{arxiv}{}{*}}
It follows from the discussion at the end of Section~\ref{sec:SDGs} that such sets of desirable gamble sets are completely determined by binary preferences between gambles. 
We therefore call them, and their corresponding choice functions, \emph{binary}. 
For any such binary set of desirable gamble sets $\rejectset$, the unique set of desirable gambles $\desirset\in\desirsets$ such that $\rejectset=\rejectset[\desirset]$ is given by $\desirset[\rejectset]$.

\begin{proposition}\label{prop:binaryiff}
Consider any set of desirable gamble sets $\rejectset\in\rejectsets$. 
Then $\rejectset$ is binary if and only if $\rejectset[{\desirset[\rejectset]}]=\rejectset$. 
\end{proposition}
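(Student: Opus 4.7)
The plan is to prove the equivalence in both directions, with essentially all the work concentrated in showing that the map $\desirset\mapsto\rejectset[\desirset]$ is invertible on its image, with inverse $\rejectset\mapsto\desirset[\rejectset]$.

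The ``if'' direction is immediate: if $\rejectset[\desirset[\rejectset]]=\rejectset$, then $\desirset[\rejectset]$ itself serves as the set of desirable gambles witnessing that $\rejectset$ is binary.

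For the ``only if'' direction, suppose $\rejectset$ is binary, so there exists $\desirset\in\desirsets$ with $\rejectset=\rejectset[\desirset]$. The key intermediate claim I would establish is that $\desirset[\rejectset]=\desirset$. This follows by a one-line unfolding of the definitions: for any $\opt\in\opts$,
\begin{equation*}
\opt\in\desirset[\rejectset]
\ifandonlyif\set{\opt}\in\rejectset
\ifandonlyif\set{\opt}\in\rejectset[\desirset]
\ifandonlyif\set{\opt}\cap\desirset\neq\emptyset
\ifandonlyif\opt\in\desirset.
\end{equation*}
Once this is in hand, substituting gives $\rejectset[\desirset[\rejectset]]=\rejectset[\desirset]=\rejectset$, as required.

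There is no real obstacle: the whole proof is a matter of peeling back the two definitions~\eqref{eq:choicefromdesir} and $\desirset[\rejectset]\coloneqq\cset{\opt\in\opts}{\set{\opt}\in\rejectset}$. The proposition essentially records the fact, already noted in the paragraph following~\eqref{eq:choicefromdesir}, that the set of desirable gambles associated with a binary $\rejectset$ is necessarily unique and is recovered from $\rejectset$ via $\desirset[\rejectset]$.
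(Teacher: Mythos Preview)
Your proof is correct and follows essentially the same approach as the paper: both directions are handled identically, with the ``only if'' part established by showing $\desirset[\rejectset]=\desirset$ via the same chain of equivalences and then substituting.
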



The coherence of a binary set of desirable gamble sets is completely determined by the coherence of its corresponding set of desirable gambles.

\begin{proposition}\label{prop:coherence:for:binary}
Consider any binary set of desirable gamble sets $\rejectset\in\rejectsets$ and let $\desirset[\rejectset]\in\desirsets$ be its corresponding set of desirable gambles. Then $\rejectset$ is coherent if and only if $\desirset[\rejectset]$~is.
\end{proposition}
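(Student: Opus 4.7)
The plan is to exploit the binary characterisation $\rejectset = \rejectset[{\desirset[\rejectset]}]$ given by Proposition~\ref{prop:binaryiff} and translate each axiom \ref{ax:rejects:nonempty}--\ref{ax:rejects:mono} into a statement about $\desirset[\rejectset]$ via the defining condition $\optset \in \rejectset[{\desirset[\rejectset]}] \iff \optset \cap \desirset[\rejectset] \neq \emptyset$. Writing $\desirset \coloneqq \desirset[\rejectset]$ for brevity, I would handle each axiom in turn and note that \ref{ax:rejects:nonempty} and \ref{ax:rejects:mono} become automatic (since $\emptyset \cap \desirset = \emptyset$ and since $\optset[1] \subseteq \optset[2]$ implies $\optset[1] \cap \desirset \subseteq \optset[2] \cap \desirset$), while \ref{ax:rejects:removezero}, \ref{ax:rejects:pos} and \ref{ax:rejects:cone} correspond respectively to \ref{ax:desirs:nozero}, \ref{ax:desirs:pos} and \ref{ax:desirs:cone}.

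For the direction $(\Rightarrow)$, I would derive \ref{ax:desirs:nozero} from \ref{ax:rejects:nonempty} and \ref{ax:rejects:removezero}: if $0 \in \desirset$ then $\set{0} \in \rejectset$, so \ref{ax:rejects:removezero} forces $\emptyset \in \rejectset$, contradicting \ref{ax:rejects:nonempty}. Axiom \ref{ax:desirs:pos} is immediate from \ref{ax:rejects:pos} applied pointwise, since $\set{\opt} \in \rejectset$ means $\opt \in \desirset$. For \ref{ax:desirs:cone}, given $\opt,\altopt \in \desirset$ and $(\lambda,\mu) > 0$, I would apply \ref{ax:rejects:cone} to the singletons $\optset[1] = \set{\opt}$ and $\optset[2] = \set{\altopt}$ with the single coefficient pair $(\lambda,\mu)$, yielding $\set{\lambda\opt+\mu\altopt} \in \rejectset$, hence $\lambda\opt + \mu\altopt \in \desirset$.

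For the direction $(\Leftarrow)$, I would argue that \ref{ax:rejects:nonempty} and \ref{ax:rejects:mono} hold trivially as noted, \ref{ax:rejects:pos} follows immediately from \ref{ax:desirs:pos}, and \ref{ax:rejects:removezero} follows from \ref{ax:desirs:nozero}: any witness $\opt \in \optset \cap \desirset$ satisfies $\opt \neq 0$, so it also lies in $(\optset \setminus \set{0}) \cap \desirset$. The only mildly non-trivial step is \ref{ax:rejects:cone}: given $\optset[1], \optset[2] \in \rejectset$, pick witnesses $\opt^* \in \optset[1] \cap \desirset$ and $\altopt^* \in \optset[2] \cap \desirset$; then \ref{ax:desirs:cone} applied to $\opt^*, \altopt^*$ with coefficients $(\lambda_{\opt^*,\altopt^*}, \mu_{\opt^*,\altopt^*})$ produces a desirable element of $\cset{\lambda_{\opt,\altopt}\opt + \mu_{\opt,\altopt}\altopt}{\opt \in \optset[1], \altopt \in \optset[2]}$, so this set lies in $\rejectset$.

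No step presents a real obstacle; the main thing to get right is the bookkeeping around \ref{ax:rejects:cone}, namely observing that only the single coefficient pair indexed by the chosen witnesses is needed in one direction, while only singleton sets are needed in the other. The translation between the two languages is otherwise entirely routine once the binary assumption is in place.
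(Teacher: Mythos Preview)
Your proposal is correct and follows essentially the same approach as the paper: the paper factors the argument through two auxiliary lemmas (one showing that a coherent $\desirset$ yields a coherent $\rejectset[\desirset]$, the other that a coherent $\rejectset$ yields a coherent $\desirset[\rejectset]$) and then invokes Proposition~\ref{prop:binaryiff}, whereas you inline the content of those lemmas directly. The axiom-by-axiom translations you give match those in the paper's lemmas essentially verbatim.
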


\section{Representation in Terms of Sets of Desirable Gambles}\label{sec:representation}

That there are sets of desirable gamble sets that are completely determined by a set of desirable gambles is nice, but such binary choice models are typically \emph{not} what we are interested in here, because then we could just as well use sets of desirable gambles to represent choice.
It is the non-binary coherent choice models that we have in our sights here.
But it turns out that our axioms lead to a representation result that allows us to still use sets of desirable gambles, or rather, sets of them, to completely characterise \emph{any} coherent choice model.




\begin{theorem}[Representation]\label{theo:rejectsets:representation}
Every coherent set of desirable gamble sets $\rejectset\in\cohrejectsets$ is dominated by at least one binary set of desirable gamble sets: $\cohdesirsets\group{\rejectset}\coloneqq\cset{\desirset\in\cohdesirsets}{\rejectset\subseteq\rejectset[\desirset]}\neq\emptyset$.
Moreover, $\rejectset=\bigcap\cset{\rejectset[\desirset]}{\desirset\in\cohdesirsets\group{\rejectset}}$.
\end{theorem}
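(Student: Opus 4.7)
The plan is to reduce the theorem to a single existence claim and attack that claim with a Zorn-style maximality argument. The inclusion $\rejectset\subseteq\bigcap\cset{\rejectset[\desirset]}{\desirset\in\cohdesirsets\group{\rejectset}}$ is immediate from the definition of $\cohdesirsets\group{\rejectset}$, so all remaining content---both non-emptiness of $\cohdesirsets\group{\rejectset}$ and the reverse inclusion---follows from the key claim that, for every $\optset\in\optsets\setminus\rejectset$, there exists $\desirset\in\cohdesirsets$ with $\rejectset\subseteq\rejectset[\desirset]$ and $\optset\cap\desirset=\emptyset$. Applying this with $\optset=\emptyset$, which lies outside $\rejectset$ by \ref{ax:rejects:nonempty}, yields non-emptiness.

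Fix $\optset\notin\rejectset$ and let $\mathcal{S}$ denote the family of all $\rejectset'\in\cohrejectsets$ with $\rejectset\subseteq\rejectset'$ and $\optset\notin\rejectset'$. It contains $\rejectset$, and each of axioms \ref{ax:rejects:nonempty}--\ref{ax:rejects:mono} is routinely preserved under directed unions, so Zorn's Lemma delivers a maximal element $\hat\rejectset\in\mathcal{S}$. It then suffices to show that $\hat\rejectset$ is binary in the sense of Section~\ref{sec:binary}: Proposition~\ref{prop:binaryiff} would yield $\hat\rejectset=\rejectset[{\desirset[\hat\rejectset]}]$, Proposition~\ref{prop:coherence:for:binary} would yield $\desirset[\hat\rejectset]\in\cohdesirsets$, and $\optset\notin\hat\rejectset=\rejectset[{\desirset[\hat\rejectset]}]$ translates to $\optset\cap\desirset[\hat\rejectset]=\emptyset$, making $\desirset:=\desirset[\hat\rejectset]$ a witness for the claim.

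For binarity, the inclusion $\rejectset[{\desirset[\hat\rejectset]}]\subseteq\hat\rejectset$ is immediate from \ref{ax:rejects:mono}. For the reverse, suppose towards contradiction that some $\altoptset\in\hat\rejectset$ contains no $\opt$ with $\set{\opt}\in\hat\rejectset$. Maximality of $\hat\rejectset$ in $\mathcal{S}$ then forces, for each $\opt\in\altoptset$, the natural coherent extension of $\hat\rejectset\cup\set{\set{\opt}}$ to contain $\optset$. The main obstacle is distilling this into a workable normal form via a careful analysis of iterated applications of \ref{ax:rejects:cone} and \ref{ax:rejects:mono}: one needs a lemma showing that, for each such $\opt$, there exist $\optset^\opt\in\hat\rejectset$ and coefficients $(\lambda_{v,\opt},\mu_{v,\opt})>0$ for $v\in\optset^\opt$ such that $\cset{\lambda_{v,\opt}v+\mu_{v,\opt}\opt}{v\in\optset^\opt}\subseteq\optset$. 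Once this normal form is secured, $\optset\in\hat\rejectset$ is derived by eliminating the elements of $\altoptset=\set{\opt_1,\dots,\opt_n}$ one at a time: at each step $i$, the set $\optset\cup\set{\opt_i,\dots,\opt_n}$ lies in $\hat\rejectset$ (from the previous step, or by \ref{ax:rejects:mono} from $\altoptset\in\hat\rejectset$ when $i=1$), and a single application of \ref{ax:rejects:cone} to it and to $\optset^{\opt_i}$---with coefficients $(\mu_{v,\opt_i},\lambda_{v,\opt_i})$ on the pairs $(\opt_i,v)$, producing elements of $\optset$, and $(1,0)$ on every other pair $(x,v)$, producing $x$---yields a subset of $\optset\cup\set{\opt_{i+1},\dots,\opt_n}$, which is then in $\hat\rejectset$ by \ref{ax:rejects:mono}. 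After $n$ steps one reaches $\optset\in\hat\rejectset$, contradicting $\optset\notin\hat\rejectset$ and finishing the proof.
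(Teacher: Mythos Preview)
Your high-level architecture matches the paper's: use Zorn's Lemma to obtain a maximal coherent extension avoiding a prescribed gamble set, then argue it must be binary. Your unification of the two halves of the theorem into a single existence claim (parametrised by $\optset\notin\rejectset$) is in fact a clean simplification of the paper's presentation, which runs two separate Zorn arguments.

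The gap lies exactly where you flag it as ``the main obstacle'': the normal form you posit is not established, and as stated it is not correct. Two points fail. First, maximality of $\hat\rejectset$ does \emph{not} force the natural extension of $\hat\rejectset\cup\set{\set{\opt}}$ to contain $\optset$; it forces the dichotomy ``either the extension contains $\optset$ \emph{or} $\hat\rejectset\cup\set{\set{\opt}}$ is inconsistent''. The inconsistent branch (i.e., $\set{0}\in\setposi(\hat\rejectset\cup\set{\set{\opt}})$) is precisely what the paper spends Proposition~\ref{prop:Kstarstar} ruling out, and that argument genuinely uses minimality of the witnessing set $\optset[o]$ (so that $\optset[o]\setminus\set{\opt}\notin\hat\rejectset$). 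Second, even in the consistent branch, $\optset$ lies in $\RS(\setposi(\hat\rejectset\cup\set{\set{\opt}}))$, so what one actually obtains is some $\optset^{\opt}\in\hat\rejectset$ and coefficients with $\cset{\lambda_{v,\opt}v+\mu_{v,\opt}\opt}{v\in\optset^{\opt}}\setminus\nonposopts\subseteq\optset$; the non-positive combinations need not land in $\optset$, so your stated inclusion $\cset{\lambda_{v,\opt}v+\mu_{v,\opt}\opt}{v\in\optset^{\opt}}\subseteq\optset$ can fail. Your elimination step uses exactly that stronger inclusion, so it does not go through as written.

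The paper's route around this is instructive: rather than force combinations into $\optset$, it exploits that any $v$ with $\lambda_v v+\mu_v\opt\leq0$ satisfies $v\leq-\tfrac{\mu_v}{\lambda_v}\opt$ (using $\opt\not\leq0$, which must also be argued), then replaces such $v$ by $-\tfrac{\mu_v}{\lambda_v}\opt$ via Lemma~\ref{lem:replacing:by:dominating:options}, and only then applies~\ref{ax:rejects:cone} to derive the contradiction. So the idea of eliminating one $\opt[i]$ at a time via~\ref{ax:rejects:cone} is right in spirit, but the coefficients you need are not the ones coming directly from the natural-extension witness; an intermediate replacement-by-dominating-option step is essential.
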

\noindent
This powerful representation result allows us to incorporate a number of other axiomatisations~\cite{2017vancamp:phdthesis} as special cases in a straightforward manner, because the binary models satisfy the required axioms, and these axioms are preserved under taking arbitrary non-empty intersections.

\section{Natural Extension}\label{sec:natex}

In many practical situations, a subject will typically not specify a full-fledged coherent set of desirable gamble sets, but will only provide some partial \emph{assessment} $\assessment\subseteq\optsets$, consisting of a number of gamble sets for which she is comfortable about assessing that they contain at least one desirable gamble.
We now want to extend this assessment~$\assessment$ to a coherent set of desirable gamble sets in a manner that is as conservative---or uninformative---as possible.
This is the essence of \emph{conservative inference}.

We say that a set of desirable gamble sets $\rejectset[1]$ is less informative than (or rather, at most as informative as) a set of desirable gamble sets $\rejectset[2]$, when \mbox{$\rejectset[1]\subseteq\rejectset[2]$}: a subject whose beliefs are represented by $\rejectset[2]$ has more (or rather, at least as many) desirable gamble sets---sets of gambles that definitely contain a desirable gamble---than a subject with beliefs represented by $\rejectset[1]$.
The resulting partially ordered set $(\rejectsets,\subseteq)$ is a complete lattice with intersection as infimum and union as supremum.
The following theorem, whose proof is trivial, identifies an interesting substructure.

\begin{theorem}\label{theo:conservative:inference}
Let $\set{\rejectset[i]}_{i\in I}$ be an arbitrary non-empty family of sets of desirable gamble sets, with intersection $\rejectset\coloneqq\bigcap_{i\in I}\rejectset[i]$.
If $\rejectset[i]$ is coherent for all $i\in I$, then so is $\rejectset$.
This implies that $(\cohrejectsets,\subseteq)$ is a complete meet-semilattice. 
\end{theorem}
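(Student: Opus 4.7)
The plan is to verify that each of the five coherence axioms \ref{ax:rejects:nonempty}--\ref{ax:rejects:mono} is preserved under arbitrary non-empty intersections, and then read off the meet-semilattice claim as an immediate consequence. Throughout, let $\rejectset\coloneqq\bigcap_{i\in I}\rejectset[i]$ with each $\rejectset[i]$ coherent and $I\neq\emptyset$.

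Axiom \ref{ax:rejects:nonempty} is the only place where the non-emptiness of $I$ is used: picking any $i_0\in I$ gives $\emptyset\notin\rejectset[i_0]\supseteq\rejectset$. Axiom \ref{ax:rejects:pos} is equally immediate, since $\set{\opt}\in\rejectset[i]$ for every $i\in I$ and every $\opt\in\posopts$, and hence $\set{\opt}\in\rejectset$. The remaining three axioms all share the same logical shape: a hypothesis of the form ``certain gamble sets belong to the model'' together with a conclusion of the form ``a derived gamble set belongs to the model''. If the hypothesis holds in $\rejectset$, then by definition of intersection it holds in every $\rejectset[i]$, so the conclusion holds in every $\rejectset[i]$ by coherence of $\rejectset[i]$, and hence in $\rejectset$. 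This argument applies verbatim to \ref{ax:rejects:removezero}, \ref{ax:rejects:cone} and \ref{ax:rejects:mono}; in the case of \ref{ax:rejects:cone} it is worth noting that the conclusion depends only on $\optset[1]$, $\optset[2]$ and the coefficients $(\lambda_{\opt,\altopt},\mu_{\opt,\altopt})$, which are fixed across $i$, so the same derived gamble set lands in every $\rejectset[i]$.

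Once this is in hand, the meet-semilattice statement is essentially a restatement. The ambient poset $(\rejectsets,\subseteq)$ is the power set of $\optsets$ under inclusion, which is a complete lattice with intersection as infimum. The preceding argument shows that $\cohrejectsets$ is closed under arbitrary non-empty intersections, so every non-empty subfamily of $\cohrejectsets$ has its infimum in $\rejectsets$ lying already in $\cohrejectsets$. That is exactly the definition of a complete meet-semilattice.

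No step here is a real obstacle; as the statement itself notes, the proof is trivial. The only subtlety worth flagging is why empty intersections are excluded: the empty intersection would be all of $\optsets$, which contains $\emptyset$ and hence violates \ref{ax:rejects:nonempty}. This is why the theorem and the semilattice claim are both stated only for non-empty families, and also why $\cohrejectsets$ is a meet-semilattice rather than a full complete lattice with a top element.
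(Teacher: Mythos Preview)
Your proof is correct and follows exactly the approach the paper intends: the paper itself declares the proof trivial and omits it, and the natural trivial proof is precisely your axiom-by-axiom verification that coherence is preserved under non-empty intersections, followed by reading off the meet-semilattice structure.
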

\noindent
This result is important, as it allows us to a extend a partially specified set of desirable gamble sets to the most conservative coherent one that includes it.
This leads to the conservative inference procedure we will call natural extension.

\begin{definition}[Consistency and natural extension]
For any assessment $\assessment\subseteq\optsets$, let $\cohrejectsets(\assessment)\coloneqq\cset{\rejectset\in\cohrejectsets}{\assessment\subseteq\rejectset}$.
We call the assessment~$\assessment$ \emph{consistent} if\/ $\cohrejectsets(\assessment)\neq\emptyset$, and we then call\/ $\natexrejectset(\assessment)\coloneqq\bigcap\cohrejectsets(\assessment)$ the \emph{natural extension} of $\assessment$.
\end{definition}
\noindent
In other words: an assessment $\assessment$ is consistent if it can be extended to some coherent rejection function, and then its natural extension $\natexrejectset(\assessment)$ is the least informative such coherent rejection function.

Our final result provides a more `constructive' expression for this natural extension and a simpler criterion for consistency. 
In order to state it, we need to introduce the set $\singposopts\coloneqq\cset{\set{\opt}}{\opt\in\posopts}$ and two operators on---transformations of---$\rejectsets$. 
The first is denoted by $\RS$, and defined by
\begin{equation*}
\RS\group{\rejectset}\coloneqq\cset{\optset\in\optsets}{(\exists\altoptset\in\rejectset)\altoptset\setminus\nonposopts\subseteq\optset}
\text{ for all $\rejectset\in\rejectsets$},
\end{equation*}
so $\RS\group{\rejectset}$ contains all gamble sets $\optset$ in $\rejectset$, all versions of $\optset$ with some of their non-positive options removed, and all supersets of such sets. 
The second is denoted by $\setposi$, and defined for all $\rejectset\in\rejectsets$ by
\begin{align*}
\setposi\group{\rejectset}\coloneqq\bigg\{
\bigg\{
\sum_{k=1}^n\lambda_{k}^{\opt[1:n]}\opt[k]
\colon
\opt[1:n]\in\times_{k=1}^n\optset[k]
\bigg\}
\colon
&n\in\naturals,(\optset[1],\dots,\optset[n])\in\rejectset^n,\\[-11pt]
&\big(\forall\opt[1:n]\in\times_{k=1}^n\optset[k]\big)\,\lambda_{1:n}^{\opt[1:n]}>0
\bigg\},
\end{align*}
where we used the notations $\opt[1:n]$ and $\lambda_{1:n}^{\opt[1:n]}$ for $n$-tuples of options $\opt[k]$ and real numbers $\lambda_{k}^{\opt[1:n]}$, $k\in\set{1,\dots,n}$, so $\opt[1:n]\in\opts^{n}$ and $\lambda_{1:n}^{\opt[1:n]}\in\reals^{n}$.
We also used $\lambda_{1:n}^{\opt[1:n]}>0$ as a shorthand for `$\lambda_k^{\opt[1:n]}\geq0$ for all $k\in\set{1,\dots,n}$ and $\sum_{k=1}^n\lambda_k^{\opt[1:n]}>0$'.

\begin{theorem}[Natural extension]\label{theo:rejectsets:consistency:and:natex}
Consider any assessment $\assessment\subseteq\optsets$. 
Then $\assessment$ is consistent if and only if\/ $\emptyset\notin\assessment$ and\/ $\set{0}\notin\setposi\group{\singposopts\cup\assessment}$.
Moreover, if $\assessment$ is consistent, then $\natexrejectset(\assessment)=\RS\group{\setposi\group{\singposopts\cup\assessment}}$.
\end{theorem}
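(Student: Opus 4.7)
My plan is to introduce the candidate
\[
\rejectset[*]\coloneqq\RS\group{\setposi\group{\singposopts\cup\assessment}}
\]
and to prove three facts: \textbf{(i)} $\rejectset[*]\subseteq\rejectset$ for every $\rejectset\in\cohrejectsets(\assessment)$; \textbf{(ii)} $\assessment\subseteq\rejectset[*]$ and $\rejectset[*]$ satisfies \ref{ax:rejects:removezero}--\ref{ax:rejects:mono}; and \textbf{(iii)} $\emptyset\in\rejectset[*]$ iff $\emptyset\in\assessment$ or $\set{0}\in\setposi(\singposopts\cup\assessment)$. From these: under the stated condition $\rejectset[*]$ is coherent and, by (i), equals $\natexrejectset(\assessment)$; otherwise (i) forces $\emptyset$ into every putative coherent extension, proving inconsistency.

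Fact \textbf{(i)} reduces to two closure lemmas for arbitrary $\rejectset\in\cohrejectsets$. The first, $\setposi(\rejectset)\subseteq\rejectset$, is proved by induction on the number $n$ of gamble sets combined: the case $n=1$ (positive rescaling per option) follows by applying \ref{ax:rejects:cone} with $\optset[1]=\optset[2]=\optset$ and suitable pair-indexed coefficients, and the inductive step merges one further gamble set via \ref{ax:rejects:cone}, routing the tuple-dependent $\lambda_{k}^{\opt[1:n]}$ through nested coefficients. The second, $\RS(\rejectset)\subseteq\rejectset$, rests on the observation that for $\altoptset\in\rejectset$ and strictly negative $\altopttoo\in\altoptset$, \ref{ax:rejects:pos} gives $\set{-\altopttoo}\in\rejectset$; applying \ref{ax:rejects:cone} with coefficients $(1,1)$ at $\opt=\altopttoo$ and $(1,0)$ elsewhere yields $(\altoptset\setminus\set{\altopttoo})\cup\set{0}\in\rejectset$, which \ref{ax:rejects:removezero} then prunes; iterating strips every non-positive entry, after which \ref{ax:rejects:mono} promotes to any superset. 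Since every $\rejectset\in\cohrejectsets(\assessment)$ contains $\singposopts\cup\assessment$ by \ref{ax:rejects:pos} and assumption, monotonicity of $\RS$ and $\setposi$ together with the closures gives $\rejectset\supseteq\RS(\setposi(\singposopts\cup\assessment))=\rejectset[*]$.

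For fact \textbf{(ii)}, the containment $\assessment\subseteq\rejectset[*]$ and the axioms \ref{ax:rejects:pos}, \ref{ax:rejects:removezero} and \ref{ax:rejects:mono} follow immediately from the definitions of $\RS$ and $\setposi$. The axiom that needs work is \ref{ax:rejects:cone}: given $\optset[1],\optset[2]\in\rejectset[*]$ with witnesses $\altoptset[1],\altoptset[2]\in\setposi(\singposopts\cup\assessment)$ satisfying $\altoptset[i]\setminus\nonposopts\subseteq\optset[i]$, I would first establish the idempotence $\setposi(\setposi(X))=\setposi(X)$ (by unrolling a nested expression into a single one whose coefficients are products of inner and outer ones) and then form a two-fold $\setposi$ combination of $\altoptset[1]$ and $\altoptset[2]$ using coefficients $(\lambda_{\opt,\altopt},\mu_{\opt,\altopt})$ on pairs $(\opt,\altopt)\in(\altoptset[1]\setminus\nonposopts)\times(\altoptset[2]\setminus\nonposopts)$ and $(1,0)$ or $(0,1)$ on pairs where $\opt$ or $\altopt$ lies in $\nonposopts$; contributions of the latter kind land in $\nonposopts$ and are discarded by $\RS$, so the survivors lie in the target gamble set and witness it as an element of $\rejectset[*]$.

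The principal obstacle is the non-trivial direction of \textbf{(iii)}. The easy direction is clear since $\emptyset,\set{0}\subseteq\nonposopts$, and $\emptyset\in\setposi(\singposopts\cup\assessment)$ precisely when $\emptyset\in\assessment$. Conversely, suppose $\altoptset=\set{\opt[1],\dots,\opt[m]}\in\setposi(\singposopts\cup\assessment)$ is a non-empty set with $\altoptset\subseteq\nonposopts$ and write
\[
\altoptset=\Bigl\{\textstyle\sum_{k=1}^{n}\lambda_{k}^{\opt[1:n]}\opt[k]\colon\opt[1:n]\in\times_{k=1}^{n}\optset[k]\Bigr\}
\]
with $\optset[k]\in\singposopts\cup\assessment$. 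For each $i$ with $\opt[i]\ne0$, the singleton $\set{-\opt[i]}$ lies in $\singposopts$ (since $-\opt[i]\in\posopts$); for indices with $\opt[i]=0$, pick any $\set{\altopt}$ with $\altopt\in\posopts$. Adjoining these $m$ singletons to $\optset[1],\dots,\optset[n]$ gives $n+m$ sets in $\singposopts\cup\assessment$. For each extended tuple, its first $n$ coordinates determine a unique $\opt[j]\in\altoptset$; on those coordinates keep the original $\lambda_{k}^{\opt[1:n]}$, and on the singleton coordinates set the coefficient at position $n+j$ to $1$ (all other singleton coefficients to $0$) when $\opt[j]\ne0$, or set every singleton coefficient to $0$ when $\opt[j]=0$. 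Each such sum equals $\opt[j]+(-\opt[j])=0$ or $\opt[j]=0$, so the whole $(n+m)$-fold $\setposi$ construction collapses to $\set{0}$, placing $\set{0}$ in $\setposi(\singposopts\cup\assessment)$ as required.
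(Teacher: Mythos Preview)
Your overall architecture is sound, and the arguments for (ii) and (iii) are correct; the singleton-adjoining trick in (iii) is a clean direct way to upgrade any non-empty non-positive $\setposi$-set to $\set{0}$. The difficulty lies entirely in (i), in the inductive proof that $\setposi(\rejectset)\subseteq\rejectset$ for coherent $\rejectset$.

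The phrase ``routing the tuple-dependent $\lambda_{k}^{\opt[1:n]}$ through nested coefficients'' hides a real obstruction. When you build an intermediate set $C\in\rejectset$ from $\optset[1],\dots,\optset[n-1]$ by iterated~\ref{ax:rejects:cone} (with generic inner coefficients so that $C$ is in bijection with $\times_{k<n}\optset[k]$) and then apply~\ref{ax:rejects:cone} to $C$ and $\optset[n]$, the inner coefficients are fixed \emph{before} $\opt[n]$ is seen. If $c_{\opt[1:n-1]}=\sum_{k<n}\mu_k\opt[k]$, then every $\alpha c+\beta\opt[n]$ has $\opt[1]$-to-$\opt[2]$ coefficient ratio $\mu_1/\mu_2$, independent of $\opt[n]$; yet $\lambda_1^{\opt[1:n]}/\lambda_2^{\opt[1:n]}$ is allowed to vary with $\opt[n]$. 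Repeating sets or chaining further applications of~\ref{ax:rejects:cone} only introduces off-diagonal terms that~\ref{ax:rejects:mono} will not let you discard. The paper does not attempt a direct induction: it proves $\setposi(\rejectset)=\rejectset$ for coherent $\rejectset$ (Proposition~\ref{prop:ax:rejects:cone:equivalents}) via the representation Theorem~\ref{theo:rejectsets:representation}, writing $\rejectset=\bigcap\cset{\rejectset[\desirset]}{\desirset\in\cohdesirsets(\rejectset)}$ and checking $\setposi(\rejectset)\subseteq\rejectset[\desirset]$ for each $\desirset$ by picking $\opt[k]\in\optset[k]\cap\desirset$ and invoking~\ref{ax:desirs:cone}. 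Your plan goes through once you replace the inductive sketch by this detour; without it, fact~(i) is unproved, and with it both the necessity half of the consistency criterion and the inclusion $\rejectset[*]\subseteq\natexrejectset(\assessment)$ collapse.
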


\section{Conclusion}

Our representation result shows that binary choice \emph{is} capable of representing general coherent choice functions, provided we extend its language with a `disjunction' of desirability statements---as is implicit in our interpretation---, next to the `conjunction' and `negation' that are already implicit in the language of sets of desirable gambles---see~\cite{quaeghebeur2015:statement} for a clear exposition of the latter claim.

In addition, we have found recently that by adding a convexity axiom, and working with more general vector spaces of options to allow for the incorporation of horse lotteries, our interpretation and corresponding axiomatisation allows for a representation in terms of lexicographic sets of desirable gambles \cite{2017vancamp:phdthesis}, and therefore encompasses the one by Seidenfeld et al.~\cite{seidenfeld2010} (without archimedeanity).
We will report on these findings in more detail elsewhere.

Future work will address (i) dealing with the consequences of merging our accept-reject statement framework \cite{quaeghebeur2015:statement} with the choice function approach to decision making; (ii) discussing the implications of our axiomatisation and representation for conditioning, independence, and indifference (exchangeability); and (iii) expanding our natural extension results to deal with the computational and algorithmic aspects of conservative inference with coherent choice functions.

\section*{Acknowledgements}

This work owes a large intellectual debt to Teddy Seidenfeld, who introduced us to the topic of choice functions.
His insistence that we ought to pay more attention to non-binary choice if we wanted to take imprecise probabilities seriously, is what eventually led to this work.

The discussion in Arthur Van Camp's PhD~thesis \cite{2017vancamp:phdthesis} was the direct inspiration for our work here, and we would like to thank Arthur for providing a pair of strong shoulders to stand on.

As with most of our joint work, there is no telling, after a while, which of us two had what idea, or did what, exactly. 
We have both contributed equally to this paper.
But since a paper must have a first author, we decided it should be the one who took the first significant steps: Jasper, in this case.


\newpage
\appendix
\iftoggle{arxiv}
{
\section{Proofs and intermediate results}\label{app:proofs}

In this appendix, besides the operators that were introduced in the main text, we also require two additional ones:
\begin{equation*}
\SU\colon\rejectsets\to\rejectsets
\colon\rejectset\mapsto\SU\group{\rejectset}\coloneqq\cset{\optset\in\optsets}{(\exists\altoptset\in\rejectset)\altoptset\subseteq\optset}
\end{equation*}
and
\begin{equation*}
\RN\colon\rejectsets\to\rejectsets
\colon\rejectset\mapsto\RN\group{\rejectset}\coloneqq\cset{\optset\in\optsets}{(\exists\altoptset\in\rejectset)\altoptset\setminus\nonposopts\subseteq\optset\subseteq\altoptset}.
\end{equation*}
Applying them in sequence has the same effect as applying the operator $\RS$.

\begin{lemma}\label{lem:combineoperators}
Consider any set of desirable gamble sets $\rejectset\in\rejectsets$. 
Then
\begin{equation*}
\RS\group{\rejectset}
=\RN\group{\SU\group{\rejectset}}.
\end{equation*}
\end{lemma}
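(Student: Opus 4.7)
The plan is to prove the set equality by showing both inclusions, since the lemma is a purely set-theoretic identity about the three operators $\RS$, $\SU$, and $\RN$. Unwinding definitions, $\optset\in\RN\group{\SU\group{\rejectset}}$ means there exists some $\altoptset'\in\SU\group{\rejectset}$ with $\altoptset'\setminus\nonposopts\subseteq\optset\subseteq\altoptset'$, and in turn such an $\altoptset'$ exists iff there is some $\altoptset\in\rejectset$ with $\altoptset\subseteq\altoptset'$. The identity $\RS\group{\rejectset}=\RN\group{\SU\group{\rejectset}}$ then amounts to the assertion that $\optset$ is a superset of some set of the form $\altoptset\setminus\nonposopts$ with $\altoptset\in\rejectset$ (the defining property of $\RS\group{\rejectset}$) if and only if such an $\optset$ can be squeezed between $\altoptset'\setminus\nonposopts$ and $\altoptset'$ for some $\altoptset'\supseteq\altoptset$.

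For the inclusion $\RN\group{\SU\group{\rejectset}}\subseteq\RS\group{\rejectset}$, I would start from $\optset$ and witnesses $\altoptset'\supseteq\altoptset$ as above, and simply observe that the chain $\altoptset\setminus\nonposopts\subseteq\altoptset'\setminus\nonposopts\subseteq\optset$ immediately exhibits $\altoptset\in\rejectset$ as a witness for membership of $\optset$ in $\RS\group{\rejectset}$.

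For the reverse inclusion $\RS\group{\rejectset}\subseteq\RN\group{\SU\group{\rejectset}}$, the key idea is to construct a witness $\altoptset'$ by \emph{enlarging} a given $\altoptset\in\rejectset$ satisfying $\altoptset\setminus\nonposopts\subseteq\optset$. The natural choice is $\altoptset'\coloneqq\optset\cup\altoptset$: this clearly contains $\altoptset$, so $\altoptset'\in\SU\group{\rejectset}$, and trivially $\optset\subseteq\altoptset'$. The remaining condition $\altoptset'\setminus\nonposopts\subseteq\optset$ then follows from the decomposition $(\optset\cup\altoptset)\setminus\nonposopts=(\optset\setminus\nonposopts)\cup(\altoptset\setminus\nonposopts)$, which is contained in $\optset\cup\optset=\optset$ by hypothesis.

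I do not anticipate any real obstacle; both directions are short set-theoretic verifications, and the only slightly non-obvious point is the choice of witness $\altoptset'=\optset\cup\altoptset$ in the second inclusion, which is dictated by the requirement that $\optset$ lie sandwiched between $\altoptset'\setminus\nonposopts$ and $\altoptset'$.
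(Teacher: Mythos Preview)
Your proposal is correct and follows essentially the same approach as the paper's proof: both inclusions are handled by the same witnesses, with $\altoptset'\coloneqq\optset\cup\altoptset$ for the direction $\RS\group{\rejectset}\subseteq\RN\group{\SU\group{\rejectset}}$ and the chain $\altoptset\setminus\nonposopts\subseteq\altoptset'\setminus\nonposopts\subseteq\optset$ for the reverse. The only cosmetic difference is that the paper writes $(\optset\cup\altoptset)\setminus\nonposopts\subseteq(\altoptset\setminus\nonposopts)\cup\optset=\optset$ rather than your decomposition into two pieces, but this is the same argument.
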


\begin{proof}
Consider any $\optset\in\RS\group{\rejectset}$, which means that there is some $\altoptset\in\rejectset$ such that $\altoptset\setminus\nonposopts\subseteq\optset$. 
Then $(\optset\cup\altoptset)\setminus\nonposopts\subseteq(\altoptset\setminus\nonposopts)\cup\optset=\optset\subseteq\optset\cup\altoptset$. 
Hence, if we let $\altoptset''\coloneqq\optset\cup\altoptset$, then $\altoptset''\setminus\nonposopts\subseteq\optset\subseteq\altoptset''$. 
Since $\altoptset\in\rejectset$ and $\altoptset\subseteq\altoptset''$ implies that $\altoptset''\in\SU\group{\rejectset}$, this allows us to conclude that $\optset\in\RN\group{\SU\group{\rejectset}}$.

Conversely, consider any $\optset\in\RN\group{\SU\group{\rejectset}}$, which means that there is some $\altoptset\in\SU\group{\rejectset}$ such that $\altoptset\setminus\nonposopts\subseteq\optset\subseteq\altoptset$. 
Then since $\altoptset\in\SU\group{\rejectset}$, there is some $\altoptset'\in\rejectset$ such that $\altoptset'\subseteq\altoptset$. 
Hence, we find that $\altoptset'\setminus\nonposopts\subseteq\altoptset\setminus\nonposopts\subseteq\optset$, which, since $\altoptset'\in\rejectset$, implies that $\optset\in\RS\group{\rejectset}$.
\qed
\end{proof}

\begin{proof}[Proposition~\ref{prop:binaryiff}]
If $\rejectset[{\desirset[\rejectset]}]=\rejectset$, then $\rejectset$ is trivially binary. 
So let us assume that $\rejectset$ is binary. 
We will prove that $\rejectset[{\desirset[\rejectset]}]=\rejectset$. 
Since $\rejectset$ is binary, there is a set of desirable gambles $\desirset'\subseteq\opts$ such that $\rejectset=\rejectset[\desirset']$. 
For any $\opt\in\opts$, this implies that
\begin{equation*}
\set{\opt}\in\rejectset
\ifandonlyif
\set{\opt}\in\rejectset[\desirset']
\ifandonlyif
\set{\opt}\cap\desirset'\neq\emptyset
\ifandonlyif
\opt\in\desirset'.
\end{equation*}
Hence, we find that $\desirset[\rejectset]=\cset{\opt\in\opts}{\set{\opt}\in\rejectset}
=\cset{\opt\in\opts}{\opt\in\desirset'}=\desirset'$, which indeed implies that $\rejectset[{\desirset[\rejectset]}]=\rejectset[\desirset']=\rejectset$.
\qed
\end{proof}

\begin{lemma}\label{lem:fromCohDtoCohK}
Consider any coherent set of desirable gambles $\desirset$. 
Then $\rejectset[\desirset]$ is a coherent set of desirable gamble sets.
\end{lemma}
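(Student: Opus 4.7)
The plan is to verify each of the five coherence axioms K$_0$--K$_4$ for $\rejectset[\desirset]$ in turn, using the corresponding coherence axioms D$_1$--D$_3$ for $\desirset$. All five verifications should be direct, essentially unpacking the definition $\rejectset[\desirset] = \cset{\optset \in \optsets}{\optset \cap \desirset \neq \emptyset}$.

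For K$_0$, note that $\emptyset \cap \desirset = \emptyset$, so $\emptyset \notin \rejectset[\desirset]$. For K$_1$, if $\optset \in \rejectset[\desirset]$, pick any $\opt \in \optset \cap \desirset$; by D$_1$ we have $0 \notin \desirset$, so $\opt \neq 0$, whence $\opt \in (\optset \setminus \set{0}) \cap \desirset$. For K$_2$, D$_2$ gives $\posopts \subseteq \desirset$, so for every $\opt \in \posopts$, $\set{\opt} \cap \desirset \neq \emptyset$. For K$_4$, monotonicity is immediate: $\optset[1] \cap \desirset \subseteq \optset[2] \cap \desirset$ whenever $\optset[1] \subseteq \optset[2]$.

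The only axiom that requires any actual thought is K$_3$. Given $\optset[1], \optset[2] \in \rejectset[\desirset]$, pick witnesses $\opt^{*} \in \optset[1] \cap \desirset$ and $\altopt^{*} \in \optset[2] \cap \desirset$; apply D$_3$ to the pair $(\opt^{*}, \altopt^{*})$ with coefficients $(\lambda_{\opt^{*},\altopt^{*}}, \mu_{\opt^{*},\altopt^{*}}) > 0$ to conclude $\lambda_{\opt^{*},\altopt^{*}} \opt^{*} + \mu_{\opt^{*},\altopt^{*}} \altopt^{*} \in \desirset$. Since this element lies in $\cset{\lambda_{\opt,\altopt}\opt+\mu_{\opt,\altopt}\altopt}{\opt\in\optset[1],\altopt\in\optset[2]}$, that set meets $\desirset$ and therefore belongs to $\rejectset[\desirset]$.

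There is no real obstacle here; the only thing to be careful about is not to confuse the existential quantifier hidden in the definition of $\rejectset[\desirset]$ (only \emph{some} element of $\optset$ needs to be desirable) with a universal one. In particular, in verifying K$_3$ it suffices to exhibit one desirable element of the combined set, which is why the single pair of witnesses $(\opt^{*}, \altopt^{*})$ is enough even though the coefficients $\lambda_{\opt,\altopt}, \mu_{\opt,\altopt}$ may vary wildly across the other pairs.
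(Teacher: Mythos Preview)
Your proof is correct and follows essentially the same approach as the paper's own proof: both verify the five axioms \ref{ax:rejects:nonempty}--\ref{ax:rejects:mono} directly from the definition of $\rejectset[\desirset]$, using \ref{ax:desirs:nozero}--\ref{ax:desirs:cone} in exactly the places you use them, and in particular handling \ref{ax:rejects:cone} by picking a single witness pair $(\opt^{*},\altopt^{*})$ and applying \ref{ax:desirs:cone} to it.
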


\begin{proof}
We need to prove that $\rejectset[\desirset]$ is coherent, or equivalently, that it satisfies Axioms~\ref{ax:rejects:nonempty}--\ref{ax:rejects:mono}.

For Axiom~\ref{ax:rejects:nonempty}, observe that Equation~\eqref{eq:choicefromdesir} trivially implies that $\emptyset\notin\rejectset[\desirset]$. 
For Axiom~\ref{ax:rejects:removezero}, observe that $\optset\cap\desirset\neq\emptyset$ implies that $(\optset\setminus\set{0})\cap\desirset\neq\emptyset$ because we know from the coherence of $\desirset$ [Axiom~\ref{ax:desirs:nozero}] that $0\notin\desirset$. 
For Axiom~\ref{ax:rejects:pos}, observe that $\set{\opt}\in\rejectset[\desirset]$ is equivalent to $\opt\in\desirset$ for all $\opt\in\opts$, and take into account the coherence of $\desirset$ [Axiom~\ref{ax:desirs:pos}].
For Axiom~\ref{ax:rejects:cone}, consider any $\optset[1],\optset[2]\in\rejectset[\desirset]$, and let $\optset\coloneqq\cset{\lambda_{\opt,\altopt}\opt+\mu_{\opt,\altopt}\altopt}{\opt\in\optset[1],\altopt\in\optset[2]}$ for any particular choice of the $(\lambda_{\opt,\altopt},\mu_{\opt,\altopt})>0$ for all $\opt\in\optset[1]$ and $\altopt\in\altopt[2]$.
Then $\optset[1]\cap\desirset\neq\emptyset$ and $\optset[2]\cap\desirset\neq\emptyset$, so we can fix any $\opt[1]\in\optset[1]\cap\desirset$ and $\opt[2]\in\optset[2]\cap\desirset$.
The coherence of $\desirset$ [Axiom~\ref{ax:desirs:cone}] then implies that $\lambda_{\opt[1],\altopt[2]}\opt[1]+\mu_{\opt[1],\altopt[2]}\altopt[2]\in\desirset$, and therefore also $\optset\cap\desirset\neq\emptyset$, whence indeed $\optset\in\rejectset[\desirset]$.
And, finally, that $\rejectset[\desirset]$ satisfies Axiom~\ref{ax:rejects:mono} is an immediate consequence of its definition~\eqref{eq:choicefromdesir}.
\qed
\end{proof}

\begin{lemma}\label{lem:from:rejection:to:desirability}
Consider any coherent set of desirable gamble sets $\rejectset$.
Then $\desirset[\rejectset]$ is a coherent set of desirable gambles, and $\rejectset[{\desirset[\rejectset]}]\subseteq\rejectset$.
\end{lemma}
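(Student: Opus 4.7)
The plan is to unpack the definitions of $\desirset[\rejectset]$ and $\rejectset[{\desirset[\rejectset]}]$ and then check each required property by pulling the matching axiom off the list \ref{ax:rejects:nonempty}--\ref{ax:rejects:mono} for $\rejectset$. Since $\desirset[\rejectset]=\cset{\opt\in\opts}{\set{\opt}\in\rejectset}$ is defined purely in terms of which singletons belong to $\rejectset$, each desirability axiom D$_i$ will reduce to a statement about singleton gamble sets.

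To verify coherence of $\desirset[\rejectset]$, I would argue each axiom separately. For \ref{ax:desirs:nozero}, I would show $\set{0}\notin\rejectset$: if it were in $\rejectset$, then \ref{ax:rejects:removezero} would give $\set{0}\setminus\set{0}=\emptyset\in\rejectset$, contradicting \ref{ax:rejects:nonempty}; hence $0\notin\desirset[\rejectset]$. Axiom \ref{ax:desirs:pos} is essentially immediate from \ref{ax:rejects:pos}, which tells us $\set{\opt}\in\rejectset$ for every $\opt\in\posopts$, so $\posopts\subseteq\desirset[\rejectset]$. For \ref{ax:desirs:cone}, given $\opt,\altopt\in\desirset[\rejectset]$ with $(\lambda,\mu)>0$, I would apply \ref{ax:rejects:cone} to the singletons $\optset[1]=\set{\opt}$ and $\optset[2]=\set{\altopt}$ with the constant choice $\lambda_{\opt,\altopt}\coloneqq\lambda$ and $\mu_{\opt,\altopt}\coloneqq\mu$; the resulting set is $\set{\lambda\opt+\mu\altopt}$, so $\lambda\opt+\mu\altopt\in\desirset[\rejectset]$.

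For the inclusion $\rejectset[{\desirset[\rejectset]}]\subseteq\rejectset$, I would pick any $\optset\in\rejectset[{\desirset[\rejectset]}]$, which by definition~\eqref{eq:choicefromdesir} means $\optset\cap\desirset[\rejectset]\neq\emptyset$, so we can fix some $\opt\in\optset$ with $\set{\opt}\in\rejectset$. Since $\set{\opt}\subseteq\optset$, the monotonicity axiom \ref{ax:rejects:mono} applied to $\optset[1]=\set{\opt}$ and $\optset[2]=\optset$ yields $\optset\in\rejectset$.

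Nothing here looks like a real obstacle; the proof is a routine bookkeeping exercise, with each clause matched to one coherence axiom. The only mildly delicate point is the handling of $0$ in \ref{ax:desirs:nozero}, where the combination of \ref{ax:rejects:nonempty} and \ref{ax:rejects:removezero} is needed—this is precisely the reason \ref{ax:rejects:removezero} was included as an axiom, and it is worth highlighting in the write-up so the reader sees why merely \ref{ax:rejects:nonempty} would not have sufficed.
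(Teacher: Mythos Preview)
Your proposal is correct and matches the paper's proof essentially step for step: each axiom \ref{ax:desirs:nozero}--\ref{ax:desirs:cone} is derived from the corresponding singleton instance of \ref{ax:rejects:nonempty}--\ref{ax:rejects:cone} (with \ref{ax:desirs:nozero} obtained via the combination of \ref{ax:rejects:removezero} and \ref{ax:rejects:nonempty}), and the inclusion $\rejectset[{\desirset[\rejectset]}]\subseteq\rejectset$ is obtained from \ref{ax:rejects:mono} exactly as you describe.
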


\begin{proof}
We first prove that $\desirset[\rejectset]$ is coherent, or equivalently, that it satisfies Axioms~\ref{ax:desirs:nozero}--\ref{ax:desirs:cone}.
For Axiom~\ref{ax:desirs:nozero}, observe that $0\in\desirset[\rejectset]$ implies that $\set{0}\in\rejectset$. Since $\rejectset$ satisfies Axiom~\ref{ax:rejects:removezero}, this implies that $\emptyset\in\rejectset$, contradicting Axiom~\ref{ax:rejects:nonempty}.
For Axiom~\ref{ax:desirs:pos}, observe that for any $\opt\in\opts$, $\opt\in\desirset[\rejectset]$ is equivalent to $\set{\opt}\in\rejectset$, and take into account Axiom~\ref{ax:rejects:pos}.
And, finally, for Axiom~\ref{ax:desirs:cone}, observe that $\opt,\altopt\in\desirset[\rejectset]$ implies that $\set{\opt},\set{\altopt}\in\rejectset$, and that Axiom~\ref{ax:rejects:cone} then implies that $\set{\lambda\opt+\mu\altopt}\in\rejectset$, or equivalently, that $\lambda\opt+\mu\altopt\in\desirset[\rejectset]$, for any choice of $(\lambda,\mu)>0$.

For the last statement, consider any $\optset\in\rejectset[{\desirset[\rejectset]}]$, meaning that $\optset\cap\desirset[\rejectset]\neq\emptyset$.
Consider any $\opt\in\optset\cap\desirset[\rejectset]$, then on the one hand $\opt\in\desirset[\rejectset]$, so $\set{\opt}\in\rejectset$.
But since on the other hand also $\opt\in\optset$, we see that $\set{\opt}\subseteq\optset$, and therefore Axiom~\ref{ax:rejects:mono} guarantees that $\optset\in\rejectset$.
\qed
\end{proof}

\begin{proof}[Proposition~\ref{prop:coherence:for:binary}]
First, suppose that $\desirset[\rejectset]$ is coherent. 
Lemma~\ref{lem:fromCohDtoCohK} then impies that $\rejectset[{\desirset[\rejectset]}]$ is coherent.
Hence, since we know from Proposition~\ref{prop:binaryiff} that $\rejectset=\rejectset[{\desirset[\rejectset]}]$, we find that $\rejectset$ is coherent.

Next, suppose that $\rejectset$ is coherent. 
Lemma~\ref{lem:from:rejection:to:desirability} then implies that $\desirset[\rejectset]$ is coherent as well.
\qed
\end{proof}

We will call a coherent set of desirable gamble sets $\maxrejectset$ \emph{maximal}, if it is not dominated by  any other coherent set of desirable gamble sets, and we collect all maximal coherent sets of desirable gamble sets in the set $\maxrejectsets\subseteq\cohrejectsets$: for any $\maxrejectset\in\cohrejectsets$,
\begin{equation*}
\maxrejectset\in\maxrejectsets
\ifandonlyif
\group{\forall\rejectset\in\cohrejectsets}
\group{\maxrejectset\subseteq\rejectset\then\maxrejectset=\rejectset}.
\end{equation*}

\begin{theorem}\label{theo:rejectsets:maximality}
Any coherent set of desirable gamble sets $\rejectset\in\cohrejectsets$ is dominated by some maximal coherent set of desirable gamble sets: $\cset{\maxrejectset\in\maxrejectsets}{\rejectset\subseteq\maxrejectset}\neq\emptyset$.
\end{theorem}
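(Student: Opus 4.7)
The plan is to apply Zorn's Lemma to the partially ordered set $(\cohrejectsets(\rejectset), \subseteq)$, where $\cohrejectsets(\rejectset) \coloneqq \cset{\rejectset' \in \cohrejectsets}{\rejectset \subseteq \rejectset'}$. This set is non-empty since it contains $\rejectset$ itself. To invoke Zorn, I need every chain in $\cohrejectsets(\rejectset)$ to have an upper bound inside $\cohrejectsets(\rejectset)$, and the natural candidate is the set-theoretic union.

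So first I would fix an arbitrary non-empty chain $\set{\rejectset[i]}_{i \in I}$ in $\cohrejectsets(\rejectset)$ and set $\rejectset[*] \coloneqq \bigcup_{i \in I} \rejectset[i]$. Clearly $\rejectset \subseteq \rejectset[*]$, so the only thing to verify is that $\rejectset[*]$ satisfies \ref{ax:rejects:nonempty}--\ref{ax:rejects:mono}. Axioms~\ref{ax:rejects:nonempty}, \ref{ax:rejects:removezero}, \ref{ax:rejects:pos}, and \ref{ax:rejects:mono} follow immediately: each only involves a single gamble set of $\rejectset[*]$ at a time, so the corresponding property of any single $\rejectset[i]$ in the chain transfers to $\rejectset[*]$.

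The only axiom that genuinely uses the chain structure is \ref{ax:rejects:cone}: if $\optset[1], \optset[2] \in \rejectset[*]$, pick $i, j \in I$ with $\optset[1] \in \rejectset[i]$ and $\optset[2] \in \rejectset[j]$; because the chain is totally ordered, we may assume without loss of generality that $\rejectset[i] \subseteq \rejectset[j]$, so $\optset[1], \optset[2]$ both lie in the coherent set $\rejectset[j]$, and \ref{ax:rejects:cone} applied to $\rejectset[j]$ puts the required linear combination back into $\rejectset[j] \subseteq \rejectset[*]$.

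With Zorn's Lemma, $\cohrejectsets(\rejectset)$ therefore admits a maximal element $\maxrejectset$. To finish, I would check that $\maxrejectset$ is also maximal in the whole of $\cohrejectsets$: if $\maxrejectset \subseteq \rejectset'$ for some $\rejectset' \in \cohrejectsets$, then $\rejectset \subseteq \maxrejectset \subseteq \rejectset'$ forces $\rejectset' \in \cohrejectsets(\rejectset)$, so maximality of $\maxrejectset$ inside $\cohrejectsets(\rejectset)$ yields $\maxrejectset = \rejectset'$. The main (and really only) obstacle is the verification of \ref{ax:rejects:cone} for the union of a chain, which is the only axiom that couples two different gamble sets and thus requires the chain hypothesis.
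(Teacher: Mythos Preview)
Your proposal is correct and follows essentially the same approach as the paper: apply Zorn's Lemma to $\cset{\rejectset'\in\cohrejectsets}{\rejectset\subseteq\rejectset'}$, take the union of a chain as the candidate upper bound, and verify coherence axiom by axiom, with \ref{ax:rejects:cone} being the only one that genuinely needs the chain condition. The paper omits your final paragraph (that maximality in the restricted poset implies maximality in all of $\cohrejectsets$), but that step is immediate and your inclusion of it is a harmless addition.
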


\begin{proof}
It is clearly enough to establish that the partially ordered set $\upset{\rejectset}\coloneqq\cset{\rejectset'\in\cohrejectsets}{\rejectset\subseteq\rejectset'}$ has a maximal element, and we use Zorn's Lemma to that effect.
So consider any chain $\mathcal{K}$ in $\upset{\rejectset}$, then we must prove that $\mathcal{K}$ has an upper bound in $\upset{\rejectset}$.
Since $\rejectset[o]\coloneqq\bigcup\mathcal{K}$ is clearly an upper bound, we are done if we can prove that $\rejectset[o]$ is coherent.

For Axiom~\ref{ax:rejects:nonempty}, simply observe that since $\emptyset$ belongs to no element of $\mathcal{K}$ [since they are all coherent], it cannot belong to their union $\rejectset[o]$.

For Axiom~\ref{ax:rejects:removezero}, consider any $\optset\in\rejectset[o]$. Then there is some $\rejectset'\in\mathcal{K}$ such that $\optset\in\rejectset'$, and since $\rejectset'$ is coherent, this implies that $\optset\setminus\set{0}\in\rejectset'\subseteq\rejectset[o]$.

For Axiom~\ref{ax:rejects:pos}, consider any $\opt\succ0$, then we know that $\set{\opt}\in\rejectset$ [since $\rejectset$ is coherent], and therefore also $\set{\opt}\in\rejectset[o]$, since $\rejectset\subseteq\rejectset[o]$.

For Axiom~\ref{ax:rejects:cone}, consider any $\optset[1],\optset[2]\in\rejectset[o]$ and, for all $\opt\in\optset[1]$ and $\altopt\in\optset[2]$, choose $(\lambda_{\opt,\altopt},\mu_{\opt,\altopt})>0$. 
Since $\optset[1],\optset[2]\in\rejectset[o]$, we know that there are $\rejectset[1],\rejectset[2]\in\mathcal{K}$ such that $\optset[1]\in\rejectset[1]$ and $\optset[2]\in\rejectset[2]$.
Since $\mathcal{K}$ is a chain, we can assume without loss of generality that $\rejectset[1]\subseteq\rejectset[2]$, and therefore $\set{\optset[1],\optset[2]}\subseteq\rejectset[2]$.
Since $\rejectset[2]$ is coherent, it follows that $\cset{\lambda_{\opt,\altopt}\opt+\mu_{\opt,\altopt}\altopt}{\opt\in\optset[1],\altopt\in\optset[2]}\in\rejectset[2]\subseteq\rejectset[o]$.

And finally, for Axiom~\ref{ax:rejects:mono}, consider any $\optset[1]\in\rejectset[o]$ and any $\optset[2]\in\optsets$ such that $\optset[1]\subseteq\optset[2]$.
Then we know that there is some $\rejectset'\in\mathcal{K}$ such that $\optset[1]\in\rejectset'$.
Since $\rejectset'$ is coherent, this implies that also $\optset[2]\in\rejectset'\subseteq\rejectset[o]$.
\qed
\end{proof}

\begin{lemma}\label{lem:binaryalternative}
A coherent set of desirable gamble sets $\rejectset$ is binary if and only if
\begin{equation}\label{eq:lem:binaryalternative}
\group{\forall\optset\in\rejectset\colon\card{\optset}\geq2}
\group{\exists\opt\in\optset}
\optset\setminus\set{\opt}\in\rejectset.
\end{equation}
\end{lemma}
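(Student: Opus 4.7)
The plan is to prove each direction separately, using Proposition~\ref{prop:binaryiff} to reduce the ``binary'' condition to the identity $\rejectset=\rejectset[{\desirset[\rejectset]}]$, and Lemma~\ref{lem:from:rejection:to:desirability} to get the inclusion $\rejectset[{\desirset[\rejectset]}]\subseteq\rejectset$ for free.

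For the direct implication, assume $\rejectset$ is binary, so $\rejectset=\rejectset[{\desirset[\rejectset]}]$ by Proposition~\ref{prop:binaryiff}. Fix any $\optset\in\rejectset$ with $\card{\optset}\geq2$. By the definition of $\rejectset[{\desirset[\rejectset]}]$, there is some witness $\opt^{\star}\in\optset\cap\desirset[\rejectset]$. Since $\card{\optset}\geq2$, I can pick some $\opt\in\optset$ with $\opt\neq\opt^{\star}$. Then $\opt^{\star}$ still lies in $\optset\setminus\set{\opt}$, so this smaller set continues to intersect $\desirset[\rejectset]$, giving $\optset\setminus\set{\opt}\in\rejectset[{\desirset[\rejectset]}]=\rejectset$, as required.

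For the converse, assume~\eqref{eq:lem:binaryalternative} holds. Since $\rejectset$ is coherent, Lemma~\ref{lem:from:rejection:to:desirability} gives $\rejectset[{\desirset[\rejectset]}]\subseteq\rejectset$, and by Proposition~\ref{prop:binaryiff} it then suffices to establish the reverse inclusion $\rejectset\subseteq\rejectset[{\desirset[\rejectset]}]$. I would prove this by induction on $\card{\optset}$, for $\optset\in\rejectset$. The base case $\card{\optset}=1$ is immediate, since $\set{\opt}\in\rejectset$ means by definition that $\opt\in\desirset[\rejectset]$, hence $\optset\cap\desirset[\rejectset]\neq\emptyset$. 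For the inductive step with $\card{\optset}=n\geq2$, property~\eqref{eq:lem:binaryalternative} yields some $\opt\in\optset$ with $\optset\setminus\set{\opt}\in\rejectset$; this smaller set has $n-1$ elements and so, by the induction hypothesis, contains some $\altopt$ with $\set{\altopt}\in\rejectset$, i.e., $\altopt\in\desirset[\rejectset]$. Since $\altopt\in\optset$, this shows $\optset\cap\desirset[\rejectset]\neq\emptyset$, hence $\optset\in\rejectset[{\desirset[\rejectset]}]$.

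The main ``obstacle'' here is really just keeping straight that~\eqref{eq:lem:binaryalternative} is a \emph{stripping} condition that lets induction propagate binary witnesses downward in size; once Proposition~\ref{prop:binaryiff} and Lemma~\ref{lem:from:rejection:to:desirability} are in hand, no additional use of the coherence axioms \ref{ax:rejects:nonempty}--\ref{ax:rejects:mono} is needed in either direction.
\qed
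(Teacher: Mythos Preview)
Your proof is correct and follows essentially the same route as the paper's. The forward direction is identical, and for the converse both arguments reduce to Proposition~\ref{prop:binaryiff} and Lemma~\ref{lem:from:rejection:to:desirability} and then strip elements from $\optset$ one at a time; you phrase this as a clean induction on $\card{\optset}$, whereas the paper presents it as a descent argument by contradiction, but the content is the same. One small quibble: your closing remark that ``no additional use of the coherence axioms \ref{ax:rejects:nonempty}--\ref{ax:rejects:mono} is needed'' slightly overstates things, since your induction tacitly relies on \ref{ax:rejects:nonempty} to ensure $\card{\optset}\geq1$ for every $\optset\in\rejectset$ (so that the base case really is $\card{\optset}=1$).
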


\begin{proof}
First assume that $\rejectset$ is binary. 
We then know from Proposition~\ref{prop:binaryiff} that $\rejectset=\rejectset[{\desirset[\rejectset]}]$, implying that $\optset\in\rejectset\ifandonlyif\optset\cap\desirset[\rejectset]\neq\emptyset$, for all $\optset\in\optsets$.
Consider any $\optset\in\rejectset$ such that $\card{\optset}\geq2$. 
Then there is some $\altopt\in\optset\cap\desirset[\rejectset]$ such that $\optset=\set{\altopt}\cup(\optset\setminus\set{\altopt})$. 
But then $\card{\optset\setminus\set{\altopt}}\geq1$, so we can consider an element $\opt\in\optset\setminus\set{\altopt}$.
Since clearly $\altopt\in(\optset\setminus\set{\opt})\cap\desirset[\rejectset]$, we see that $(\optset\setminus\set{\opt})\cap\desirset[\rejectset]\neq\emptyset$ and therefore, that $\optset\setminus\set{\opt}\in\rejectset$.

Next assume that Equation~\eqref{eq:lem:binaryalternative} holds. 
Because of Proposition~\ref{prop:binaryiff}, it suffices to show that $\rejectset[{\desirset[\rejectset]}]=\rejectset$. 
We infer from Lemma~\ref{lem:from:rejection:to:desirability} that $\desirset[\rejectset]$ is a coherent set of desirable gambles, and that $\rejectset[{\desirset[\rejectset]}]\subseteq\rejectset$.
Assume {\itshape ex absurdo} that $\rejectset[{\desirset[\rejectset]}]\subset\rejectset$, so there is some $\optset\in\rejectset$ such that $\optset\notin\rejectset[{\desirset[\rejectset]}]$, or equivalently, such that $\optset\cap\desirset[\rejectset]=\emptyset$.
But then we must have that $\card{\optset}\geq2$, because otherwise $\optset=\set{\altopt}$ with $\altopt\notin\desirset[\rejectset]$ and therefore $\optset=\set{\altopt}\notin\rejectset$, a contradiction.
But then it follows from Equation~\eqref{eq:lem:binaryalternative} that there is some $\opt[1]\in\optset$ such that $\optset[1]\coloneqq\optset\setminus\set{\opt[1]}\in\rejectset$.
Since it follows from $\optset\cap\desirset[\rejectset]=\emptyset$ that also $\optset[1]\cap\desirset[\rejectset]=\emptyset$, we see that also $\optset[1]\notin\rejectset[{\desirset[\rejectset]}]$.
We can now repeat the same argument with $\optset[1]$ instead of $\optset$ to find that it must be that $\card{\optset[1]}\geq2$, so there is some $\opt[2]\in\optset[1]$ such that $\optset[2]\coloneqq\optset[1]\setminus\set{\opt[2]}\in\rejectset$ and $\optset[2]\notin\rejectset[{\desirset[\rejectset]}]$. 
Repeating the same argument over and over again will eventually lead to a contradiction with $\card{\optset[n]}\geq2$.
Hence it must be that $\rejectset[{\desirset[\rejectset]}]=\rejectset$.
\qed
\end{proof}

\begin{lemma}\label{lem:replacing:by:dominating:options}
Consider any set of desirable gamble sets $\rejectset\in\rejectsets$ that satisfies Axioms~\ref{ax:rejects:pos} and\/~\ref{ax:rejects:cone}.
Consider any $\optset\in\rejectset$.
Then for any $\altopt\in\optset$ and any $\altopt'\in\opts$ such that $\altopt\leq\altopt'$, the gamble set $\altoptset\coloneqq\set{\altopt'}\cup\group{\optset\setminus\set{\altopt}}$ obtained by replacing $\altopt$ in $\optset$ with the dominating option $\altopt'$ still belongs to $\rejectset$: $\altoptset\in\rejectset$.
\end{lemma}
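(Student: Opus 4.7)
The plan is to use Axiom~\ref{ax:rejects:cone} with $\optset[1]=\optset$ and a carefully chosen singleton $\optset[2]$, after first disposing of the trivial case where no replacement happens.

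First I would handle the trivial case $\altopt=\altopt'$, in which $\altoptset=\optset$ and there is nothing to prove. So from now on assume $\altopt\neq\altopt'$. Since $\altopt\leq\altopt'$ by hypothesis, the gamble $\altopt'-\altopt$ is nonnegative but not identically zero, i.e.\ $\altopt'-\altopt\in\posopts$. Axiom~\ref{ax:rejects:pos} then guarantees that $\set{\altopt'-\altopt}\in\rejectset$.

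Next I would apply Axiom~\ref{ax:rejects:cone} to the two gamble sets $\optset[1]\coloneqq\optset$ and $\optset[2]\coloneqq\set{\altopt'-\altopt}$, both of which lie in $\rejectset$. For each pair $(\opt,\altopttoo)\in\optset[1]\times\optset[2]$ (so $\altopttoo=\altopt'-\altopt$), I choose the coefficients as follows: if $\opt=\altopt$, take $\lambda_{\opt,\altopttoo}=\mu_{\opt,\altopttoo}=1$, so that $\lambda_{\opt,\altopttoo}\opt+\mu_{\opt,\altopttoo}\altopttoo=\altopt+(\altopt'-\altopt)=\altopt'$; and if $\opt\neq\altopt$, take $\lambda_{\opt,\altopttoo}=1$ and $\mu_{\opt,\altopttoo}=0$, so that $\lambda_{\opt,\altopttoo}\opt+\mu_{\opt,\altopttoo}\altopttoo=\opt$. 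In both cases $(\lambda_{\opt,\altopttoo},\mu_{\opt,\altopttoo})>0$, so Axiom~\ref{ax:rejects:cone} applies and yields
\begin{equation*}
\cset{\lambda_{\opt,\altopttoo}\opt+\mu_{\opt,\altopttoo}\altopttoo}{\opt\in\optset[1],\altopttoo\in\optset[2]}
=\set{\altopt'}\cup\group{\optset\setminus\set{\altopt}}
=\altoptset\in\rejectset,
\end{equation*}
which is exactly what we needed.

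There is no serious obstacle here; the only thing to watch is the bookkeeping in Axiom~\ref{ax:rejects:cone}, namely that the coefficients $(\lambda_{\opt,\altopttoo},\mu_{\opt,\altopttoo})$ are indexed over the product $\optset[1]\times\optset[2]$ and that the strict-positivity condition $(\lambda,\mu)>0$ permits taking $\mu=0$ as long as $\lambda>0$. The possibility that $\altopt'$ already belongs to $\optset\setminus\set{\altopt}$ is harmless, since the displayed set is the same either way.
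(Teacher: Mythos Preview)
Your proof is correct and follows essentially the same approach as the paper: dispose of the trivial case $\altopt=\altopt'$, use Axiom~\ref{ax:rejects:pos} to get $\set{\altopt'-\altopt}\in\rejectset$, and then apply Axiom~\ref{ax:rejects:cone} to $\optset$ and $\set{\altopt'-\altopt}$ with coefficients $(1,1)$ at $\altopt$ and $(1,0)$ elsewhere. The paper's proof is nearly word-for-word the same, merely writing $\altopt''\coloneqq\altopt'-\altopt$ for the difference gamble.
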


\begin{proof}
We may assume without loss of generality that $\optset\neq\emptyset$ and that $\altopt'\neq\altopt$.
Let $\altopt''\coloneqq\altopt'-\altopt$, then $\altopt''\in\posopts$, and therefore Axiom~\ref{ax:rejects:pos} implies that $\set{\altopt''}\in\rejectset$.
Applying Axiom~\ref{ax:rejects:cone} for $\optset$ and $\set{\altopt''}$ allows us to infer that $\cset{\lambda_{\opt}\opt+\mu_{\opt}\altopt''}{\opt\in\optset}\in\rejectset$ for all possible choices of $(\lambda_{\opt},\mu_{\opt})>0$.
Choosing $(\lambda_{\opt},\mu_{\opt})\coloneqq(1,0)$ for all $\opt\in\optset\setminus\set{\altopt}$ and $(\lambda_{\altopt},\mu_{\altopt})\coloneqq(1,1)$ yields in particular that $\altoptset=\set{\altopt'}\cup(\optset\setminus\set{\altopt})\in\rejectset$.
\qed
\end{proof}

\begin{lemma}\label{lem:replacing:nonpositives:by:zero}
Consider any set of desirable gamble sets $\rejectset\in\rejectsets$ that satisfies Axioms~\ref{ax:rejects:pos} and\/~\ref{ax:rejects:cone}.
Consider any $\optset\in\rejectset$ such that $\optset\cap\nonposopts\neq\emptyset$ and any $\altopt\in\optset\cap\nonposopts$, and construct the gamble set $\altoptset\coloneqq\set{0}\cup\group{\optset\setminus\set{\altopt}}$ by replacing $\altopt$ with $0$.
Then still $\altoptset\in\rejectset$.
\end{lemma}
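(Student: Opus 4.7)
The plan is to derive this result as an immediate corollary of Lemma~\ref{lem:replacing:by:dominating:options}. The key observation is that the hypothesis $\altopt\in\nonposopts$ simply means that $\altopt\leq0$, so the zero gamble plays the role of a dominating option for $\altopt$ in the sense required by that earlier lemma: we have $0\in\opts$ and $\altopt\leq0$.

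Concretely, I would set $\altopt'\coloneqq0$ and verify the two hypotheses of Lemma~\ref{lem:replacing:by:dominating:options}: firstly, $\rejectset$ satisfies Axioms~\ref{ax:rejects:pos} and~\ref{ax:rejects:cone} by assumption; secondly, $\altopt\in\optset$ and $\altopt'\in\opts$ with $\altopt\leq\altopt'$. Applying the lemma to $\optset\in\rejectset$ with this choice of dominating option yields
\begin{equation*}
\set{\altopt'}\cup\group{\optset\setminus\set{\altopt}}
=\set{0}\cup\group{\optset\setminus\set{\altopt}}
=\altoptset\in\rejectset,
\end{equation*}
which is precisely the desired conclusion.

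I do not expect any real obstacle. The only potentially delicate case is the degenerate one where $\altopt=0$, in which $\altoptset=\set{0}\cup\group{\optset\setminus\set{0}}=\optset\in\rejectset$ holds trivially; this case is also transparently absorbed by Lemma~\ref{lem:replacing:by:dominating:options}, whose proof itself begins by reducing to $\altopt'\neq\altopt$ without loss of generality.
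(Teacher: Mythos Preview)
Your proposal is correct and matches the paper's own proof exactly: the paper simply states that the result is an immediate consequence of Lemma~\ref{lem:replacing:by:dominating:options}, which is precisely what you do by taking $\altopt'\coloneqq0$.
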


\begin{proof}
Immediate consequence of Lemma~\ref{lem:replacing:by:dominating:options}.
\qed
\end{proof}

\begin{proposition}\label{prop:ax:rejects:RN:equivalents}
Consider any coherent set of desirable gamble sets $\rejectset\in\cohrejectsets$, then $\RN\group{\rejectset}=\rejectset$.
\end{proposition}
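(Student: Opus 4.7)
The plan is to prove $\RN\group{\rejectset} = \rejectset$ by verifying both inclusions. The inclusion $\rejectset \subseteq \RN\group{\rejectset}$ is immediate from the definition of $\RN$: for any $\optset \in \rejectset$, taking $\altoptset \coloneqq \optset$ gives trivially $\altoptset \setminus \nonposopts \subseteq \optset \subseteq \altoptset$. All the work is in the converse direction.

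For the inclusion $\RN\group{\rejectset} \subseteq \rejectset$, I would fix any $\optset \in \RN\group{\rejectset}$ together with a witness $\altoptset \in \rejectset$ such that $\altoptset \setminus \nonposopts \subseteq \optset \subseteq \altoptset$. The difference $S \coloneqq \altoptset \setminus \optset$ is then a finite subset of $\nonposopts$. The key idea is to use Lemma~\ref{lem:replacing:nonpositives:by:zero} iteratively to replace each element of $S$ by $0$ while staying inside $\rejectset$: enumerate $S = \set{\altopt[1], \dots, \altopt[m]}$, set $\altoptset[0] \coloneqq \altoptset$, and define $\altoptset[k] \coloneqq \set{0} \cup \group{\altoptset[k-1] \setminus \set{\altopt[k]}}$ for $k \geq 1$. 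An easy induction, using that the $\altopt[k]$ are pairwise distinct and all lie in $\altoptset \cap \nonposopts$, shows that $\altopt[k] \in \altoptset[k-1] \cap \nonposopts$ at each stage, so that the lemma applies and yields $\altoptset[k] \in \rejectset$. After $m$ steps we arrive at $\altoptset[m] = \set{0} \cup \optset \in \rejectset$.

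To finish, Axiom~\ref{ax:rejects:removezero} converts $\set{0} \cup \optset \in \rejectset$ into $\optset \setminus \set{0} \in \rejectset$, and then a single application of monotonicity (Axiom~\ref{ax:rejects:mono}), together with the obvious inclusion $\optset \setminus \set{0} \subseteq \optset$, delivers $\optset \in \rejectset$. This argument handles uniformly the two cases $0 \in \optset$ and $0 \notin \optset$.

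The only real subtlety is the bookkeeping in the induction, namely checking that $\altopt[k]$ is still present in $\altoptset[k-1]$ before we try to remove it; this is guaranteed by the pairwise distinctness of the elements of $S$. It is also worth remarking that the degenerate situation $\optset = \emptyset$ cannot arise: coherence of $\rejectset$, combined with Lemma~\ref{lem:replacing:nonpositives:by:zero} and Axiom~\ref{ax:rejects:removezero}, rules out $\altoptset \in \rejectset$ with $\altoptset \subseteq \nonposopts$, so $\altoptset \setminus \nonposopts \neq \emptyset$ and hence $\optset \neq \emptyset$.
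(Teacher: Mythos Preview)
Your proof is correct and follows essentially the same approach as the paper's: both arguments iterate Lemma~\ref{lem:replacing:nonpositives:by:zero} to replace the non-positive gambles by $0$, then invoke Axiom~\ref{ax:rejects:removezero} to remove the $0$, with Axiom~\ref{ax:rejects:mono} supplying the remaining inclusion. The only cosmetic difference is the order: the paper first uses \ref{ax:rejects:mono} to reduce to showing $\altoptset\setminus\nonposopts\in\rejectset$ and then replaces all of $\altoptset\cap\nonposopts$, whereas you replace only the gambles in $\altoptset\setminus\optset$ and defer \ref{ax:rejects:mono} to the end.
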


\begin{proof}
That $\rejectset\subseteq\RN\group{\rejectset}$ is an immediate consequence of the definition of the $\RN$ operator. 
To prove that $\RN\group{\rejectset}\subseteq\rejectset$, consider any $\optset\in\RN\group{\rejectset}$, which means that there is some $\altoptset\in\rejectset$ such that $\altoptset\setminus\nonposopts\subseteq\optset\subseteq\altoptset$. 
We need to prove that $\optset\in\rejectset$. 
Since $\rejectset$ satisfies Axiom~\ref{ax:rejects:mono}, it suffices to prove that $\altoptset\setminus\nonposopts\in\rejectset$.

If $\altoptset\cap\nonposopts=\emptyset$, then $\altoptset\setminus\nonposopts=\altoptset\in\rejectset$. 
Therefore, without loss of generality, we may assume that $\altoptset\cap\nonposopts\neq\emptyset$. 
For any $\opt\in\altoptset\cap\nonposopts$, Lemma~\ref{lem:replacing:nonpositives:by:zero} implies that we may replace $\opt$ by $0$ and still be guaranteed that the resulting set belongs to $\rejectset$. 
Hence, we can replace all elements of $\altoptset\cap\nonposopts$ with $0$ and still be guaranteed that the result $\altoptset'\coloneqq\set{0}\cup(\altoptset\setminus\nonposopts)$ belongs to $\rejectset$.
Applying Axiom~\ref{ax:rejects:removezero} now guarantees that, indeed, $\altoptset\setminus\nonposopts=\altoptset'\setminus\set{0}\in\rejectset$.
\qed
\end{proof}

\begin{proposition}\label{prop:removal:of:nonpositives}
Consider any set of desirable gamble sets $\rejectset\in\rejectsets$.
Then\/ $\RN\group{\rejectset}$ satisfies Axiom~\ref{ax:rejects:removezero}.
Moreover, if $\rejectset$ satisfies Axioms~\ref{ax:rejects:nonempty}, \ref{ax:rejects:pos}, \ref{ax:rejects:cone} and\/~\ref{ax:rejects:mono} and does not contain $\set{0}$, then so does\/ $\RN\group{\rejectset}$.
\end{proposition}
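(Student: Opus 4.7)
The plan is to split the proposition into two claims and handle them separately. For the first claim (that $\RN(\rejectset)$ always satisfies \ref{ax:rejects:removezero}), I would observe that if $\optset \in \RN(\rejectset)$ with witness $\altoptset \in \rejectset$ satisfying $\altoptset \setminus \nonposopts \subseteq \optset \subseteq \altoptset$, then the same witness $\altoptset$ also works for $\optset \setminus \set{0}$: since $0 \in \nonposopts$, we have $0 \notin \altoptset \setminus \nonposopts$, so removing $0$ from $\optset$ preserves $\altoptset \setminus \nonposopts \subseteq \optset \setminus \set{0} \subseteq \altoptset$. No coherence assumption is needed here.

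For the second claim, I would first dispatch the easy conditions. Axiom~\ref{ax:rejects:pos} for $\RN(\rejectset)$ follows because for $\opt \in \posopts$ we have $\set{\opt} \in \rejectset$ by \ref{ax:rejects:pos} for $\rejectset$ and $\set{\opt} \setminus \nonposopts = \set{\opt}$, so $\set{\opt}$ is its own $\RN$-witness. Axiom~\ref{ax:rejects:mono} follows by taking, for $\optset \in \RN(\rejectset)$ with witness $B$ and any $\optset' \supseteq \optset$, the enlarged witness $B' \coloneqq B \cup \optset' \in \rejectset$ (using \ref{ax:rejects:mono} for $\rejectset$); one easily checks $B' \setminus \nonposopts \subseteq \optset' \subseteq B'$. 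The non-inclusion of $\set{0}$ in $\RN(\rejectset)$ and Axiom~\ref{ax:rejects:nonempty} share a common argument: any putative witness $B \in \rejectset$ would have to satisfy $B \subseteq \nonposopts$ (since $\set{0} \setminus \nonposopts = \emptyset$ in the first case, and $\emptyset \setminus \nonposopts = \emptyset$ in the second); since \ref{ax:rejects:nonempty} for $\rejectset$ forces $B \neq \emptyset$, iterating Lemma~\ref{lem:replacing:nonpositives:by:zero} (which uses \ref{ax:rejects:pos} and \ref{ax:rejects:cone} for $\rejectset$) replaces every nonpositive element of $B$ by $0$ and yields $\set{0} \in \rejectset$, contradicting the hypothesis.

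The main obstacle is Axiom~\ref{ax:rejects:cone}. Given $\optset[1], \optset[2] \in \RN(\rejectset)$ with respective witnesses $B_1, B_2 \in \rejectset$ and coefficients $(\lambda_{\opt,\altopt}, \mu_{\opt,\altopt}) > 0$ on $\optset[1] \times \optset[2]$, the desired combination set $C \coloneqq \cset{\lambda_{\opt,\altopt}\opt + \mu_{\opt,\altopt}\altopt}{\opt \in \optset[1], \altopt \in \optset[2]}$ must be shown to lie in $\RN(\rejectset)$. My plan is to extend the coefficients to all of $B_1 \times B_2$ so that the extra combinations are manifestly nonpositive, and then use Axiom~\ref{ax:rejects:cone} on the witnesses. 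Concretely, keep $(\lambda_{\opt,\altopt}, \mu_{\opt,\altopt})$ on $\optset[1] \times \optset[2]$, and for the remaining pairs set $(\lambda'_{\opt,\altopt}, \mu'_{\opt,\altopt}) \coloneqq (1,0)$ whenever $\opt \in B_1 \setminus \optset[1]$ and $(0,1)$ whenever $\opt \in \optset[1]$ but $\altopt \in B_2 \setminus \optset[2]$; both are valid positive pairs. Applying Axiom~\ref{ax:rejects:cone} for $\rejectset$ to $B_1, B_2$ with these coefficients produces a gamble set $B \in \rejectset$ that decomposes as $B = C \cup (B_1 \setminus \optset[1]) \cup (B_2 \setminus \optset[2])$. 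The key observation is that $B_i \setminus \optset[i] \subseteq \nonposopts$ for $i = 1, 2$, since $B_i \setminus \nonposopts \subseteq \optset[i]$ by the witness property; consequently $B \setminus \nonposopts \subseteq C \subseteq B$, which exhibits $B$ as a witness that $C \in \RN(\rejectset)$.
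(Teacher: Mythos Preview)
Your proof is correct and follows essentially the same route as the paper's: the same witness argument for \ref{ax:rejects:removezero}, the same enlarged witness $B\cup\optset'$ for \ref{ax:rejects:mono}, the same contradiction via Lemma~\ref{lem:replacing:nonpositives:by:zero} for \ref{ax:rejects:nonempty} and the exclusion of $\set{0}$, and---crucially---the identical coefficient extension $(1,0)$ on $(B_1\setminus\optset[1])\times B_2$ and $(0,1)$ on $\optset[1]\times(B_2\setminus\optset[2])$ for \ref{ax:rejects:cone}, yielding a witness that differs from $C$ only by nonpositive gambles. The paper's write-up is slightly more explicit in displaying the resulting set, but the argument is the same.
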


\begin{proof}
The proof of the first statement is trivial.
For the second statement, assume that $\rejectset$ does not contain $\set{0}$, and satisfies Axioms~\ref{ax:rejects:nonempty}, \ref{ax:rejects:pos}, \ref{ax:rejects:cone} and~\ref{ax:rejects:mono}.

To prove that $\RN\group{\rejectset}$ satisfies Axiom~\ref{ax:rejects:nonempty} and does not contain $\set{0}$, assume \emph{ex absurdo} that $\emptyset\in\RN\group{\rejectset}$ or $\set{0}\in\RN\group{\rejectset}$. 
We then find that there is some $\altoptset\in\rejectset$ such that either $\altoptset\setminus\nonposopts\subseteq\emptyset\subseteq\altoptset$ or $\altoptset\setminus\nonposopts\subseteq\{0\}\subseteq\altoptset$. 
In both cases, it follows that $\altoptset\subseteq\nonposopts$. 
If $\altoptset=\emptyset$, this contradicts our assumption that $\rejectset$ satisfies Axiom~\ref{ax:rejects:nonempty}. 
If $\altoptset\neq\emptyset$, it follows from Lemma~\ref{lem:replacing:nonpositives:by:zero} that we can replace every $\opt\in\altoptset$ by $0$ and still be guaranteed that the resulting gamble set $\set{0}$ belongs to $\rejectset$, again contradicting our assumptions.

To prove that $\RN\group{\rejectset}$ satisfies Axiom~\ref{ax:rejects:pos}, simply observe that the operator $\RN$ never removes gamble sets from a set of desirable gamble sets, so the gamble sets $\set{\opt}$, $\opt\in\posopts$, which belong to $\rejectset$ by Axiom~\ref{ax:rejects:pos}, will also belong to the larger $\RN\group{\rejectset}$.

To prove that $\RN\group{\rejectset}$ satisfies Axiom~\ref{ax:rejects:cone}, consider any $\optset[1],\optset[2]\in\RN\group{\rejectset}$, meaning that there are $\altoptset[1],\altoptset[2]\in\rejectset$ such that $\altoptset[1]\setminus\nonposopts\subseteq\optset[1]\subseteq\altoptset[1]$ and $\altoptset[2]\setminus\nonposopts\subseteq\optset[2]\subseteq\altoptset[2]$.
For any $\opt\in\optset[1]$ and $\altopt\in\optset[2]$, we choose $(\lambda_{\opt,\altopt},\mu_{\opt,\altopt})>0$, and let 
\begin{equation*}
\optset
\coloneqq\cset{\lambda_{\opt,\altopt}\opt+\mu_{\opt,\altopt}\altopt}{\opt\in\optset[1],\altopt\in\optset[2]}.
\end{equation*}
Then we have to prove that $\optset\in\RN\group{\rejectset}$.
Since $\rejectset$ satisfies Axiom~\ref{ax:rejects:cone}, we infer from $\altoptset[1],\altoptset[2]\in\rejectset$ that
\begin{align*}
\altoptsettoo
\coloneqq&\cset{\lambda_{\opt,\altopt}\opt+\mu_{\opt,\altopt}\altopt}
{\opt\in\optset[1],\altopt\in\optset[2]}\\
&\qquad\cup\cset{1\opt+0\altopt}{\opt\in\altoptset[1]\setminus\optset[1],\altopt\in\altoptset[2]}
\cup\cset{0\opt+1\altopt}{\opt\in\optset[1],\altopt\in\altoptset[2]\setminus\optset[2]}\\
=&\,\optset\cup
\cset{\opt}{\opt\in\altoptset[1]\setminus\optset[1],\altopt\in\altoptset[2]}
\cup\cset{\altopt}{\opt\in\optset[1],\altopt\in\altoptset[2]\setminus\optset[2]}
\end{align*}
belongs to $\rejectset$ as well. 
Furthermore, since $\altoptset[1]\setminus\nonposopts\subseteq\optset[1]$ and $\altoptset[2]\setminus\nonposopts\subseteq\optset[2]$ imply that $\altoptset[1]\setminus\optset[1]\subseteq\nonposopts$ and $\altoptset[2]\setminus\optset[2]\subseteq\nonposopts$, we see that
\begin{equation*}
\cset{\opt}{\opt\in\altoptset[1]\setminus\optset[1],\altopt\in\altoptset[2]}
\cup\cset{\altopt}{\opt\in\optset[1],\altopt\in\altoptset[2]\setminus\optset[2]}
\subseteq\nonposopts.
\end{equation*}
Hence, $\altoptsettoo\setminus\nonposopts\subseteq\optset\subseteq\altoptsettoo$. 
Since $\altoptsettoo\in\rejectset$, this implies that, indeed, $\optset\in\RN\group{\rejectset}$.

Finally, to prove that $\RN\group{\rejectset}$ satisfies Axiom~\ref{ax:rejects:mono}, consider any $\optset[1]\in\RN\group{\rejectset}$ and any $\optset[2]\in\optsets$ such that $\optset[1]\subseteq\optset[2]$.
We need to prove that $\optset[2]\in\rejectset$.
That $\optset[1]\in\RN\group{\rejectset} $ implies that there is some $\altoptset[1]\in\rejectset$ such that $\altoptset[1]\setminus\nonposopts\subseteq\optset[1]\subseteq\altoptset[1]$.
Let $\altoptset[2]\coloneqq\altoptset[1]\cup\group{\optset[2]\setminus\optset[1]}$, then $\altoptset[1]\subseteq\altoptset[2]$ and therefore also $\altoptset[2]\in\rejectset$, because $\rejectset$ satisfies Axiom~\ref{ax:rejects:mono}.
We now infer from $\altoptset[1]\setminus\nonposopts\subseteq\optset[1]\subseteq\altoptset[1]$ that
\begin{equation*}
\altoptset[2]\setminus\nonposopts
\subseteq\group{\altoptset[1]\setminus\nonposopts}\cup\group{\optset[2]\setminus\optset[1]}
\subseteq\optset[1]\cup\group{\optset[2]\setminus\optset[1]}
\subseteq\altoptset[1]\cup\group{\optset[2]\setminus\optset[1]}.
\end{equation*}
Since $\optset[1]\cup\group{\optset[2]\setminus\optset[1]}=\optset[2]$, this allows us to conclude that $\altoptset[2]\setminus\nonposopts\subseteq\optset[2]\subseteq\altoptset[2]$, and therefore, since $\altoptset[2]\in\rejectset$, that, indeed, $\optset[2]\in\RN\group{\rejectset}$.
\qed
\end{proof}

\begin{proposition}\label{prop:Kstarstar}
Consider a coherent set of desirable gamble sets $\rejectset\in\cohrejectsets$ and any $\optset[o]\in\rejectset$ such that $\card{\optset[o]}\geq2$ and $\optset[o]\setminus\set{\opt}\notin\rejectset$ for all $\opt\in\optset[o]$. 
Choose any $\opt[o]\in\optset[o]$ and let
\begin{equation}\label{eq:prop:Kstarstar}
\rejectset^{**}
\coloneqq
\cset[\Big]{\cset[\big]{\lambda_{\altopt}\altopt+\mu_{\altopt}\opt[o]}{\altopt\in\altoptset}}
{\altoptset\in\rejectset,(\forall\altopt\in\altoptset)(\lambda_{\altopt},\mu_{\altopt})>0}.
\end{equation}
Then $\rejectset^{*}\coloneqq\RN\group{\rejectset^{**}}$ is a coherent set of desirable gamble sets that is a superset of $\rejectset$ and contains $\set{\opt[o]}$, and furthermore $\set{\opt[o]}\notin\rejectset$ and $\opt[o]\not\leq0$.
\end{proposition}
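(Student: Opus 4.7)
The proposition has five components to establish, and the plan is to handle the easier ones first and reduce the hardest---coherence---to Proposition~\ref{prop:removal:of:nonpositives}.

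Taking $(\lambda_\altopt, \mu_\altopt) = (1, 0)$ in Equation~\eqref{eq:prop:Kstarstar} shows $\rejectset \subseteq \rejectset^{**}$, so $\rejectset \subseteq \rejectset^{**} \subseteq \RN(\rejectset^{**}) = \rejectset^*$ because $\RN$ is expansive. Similarly, picking $\altoptset = \optset[o]$ and $(\lambda_\altopt, \mu_\altopt) = (0, 1)$ for each $\altopt \in \optset[o]$ yields $\{\opt[o]\} \in \rejectset^{**} \subseteq \rejectset^*$. To see $\{\opt[o]\} \notin \rejectset$, note that $\card{\optset[o]} \geq 2$ gives some $\opt \in \optset[o] \setminus \set{\opt[o]}$, so $\set{\opt[o]} \subseteq \optset[o] \setminus \set{\opt}$; were $\set{\opt[o]} \in \rejectset$, Axiom~\ref{ax:rejects:mono} would force $\optset[o] \setminus \set{\opt} \in \rejectset$, contradicting the hypothesis. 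The argument that $\opt[o] \not\leq 0$ is analogous: assuming $\opt[o] \leq 0$, Lemma~\ref{lem:replacing:by:dominating:options} (with $\altopt = \opt[o]$ and $\altopt' = 0$) yields $\set{0} \cup (\optset[o] \setminus \set{\opt[o]}) \in \rejectset$, after which Axioms~\ref{ax:rejects:removezero} and~\ref{ax:rejects:mono} force $\optset[o] \setminus \set{\opt[o]} \in \rejectset$, again a contradiction. In particular $\opt[o] \neq 0$, a fact used below.

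To establish coherence of $\rejectset^* = \RN(\rejectset^{**})$ I would invoke Proposition~\ref{prop:removal:of:nonpositives}, reducing the task to checking that $\rejectset^{**}$ satisfies Axioms~\ref{ax:rejects:nonempty}, \ref{ax:rejects:pos}, \ref{ax:rejects:cone} and~\ref{ax:rejects:mono} and does not contain $\set{0}$. Axiom~\ref{ax:rejects:nonempty} is immediate since each image is of a non-empty $\altoptset$, and Axiom~\ref{ax:rejects:pos} follows from Axiom~\ref{ax:rejects:pos} on $\rejectset$ via the $(1,0)$-coefficient choice. For $\set{0} \notin \rejectset^{**}$, assume otherwise: some $\altoptset \in \rejectset$ and coefficients give image $\set{0}$, so $\lambda_\altopt \altopt + \mu_\altopt \opt[o] = 0$ for every $\altopt \in \altoptset$. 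Since $\opt[o] \neq 0$, the case $\lambda_\altopt = 0$ is excluded, so every $\altopt$ equals $-t_\altopt \opt[o]$ for $t_\altopt := \mu_\altopt/\lambda_\altopt \geq 0$. Applying Axiom~\ref{ax:rejects:cone} in $\rejectset$ to $\altoptset$ and $\optset[o]$ with coefficient $(1, t_\altopt)$ at each pair $(\altopt, \opt[o])$ and $(0, 1)$ at each pair $(\altopt, \opt)$ with $\opt \in \optset[o] \setminus \set{\opt[o]}$ produces $\set{0} \cup (\optset[o] \setminus \set{\opt[o]}) \in \rejectset$; Axioms~\ref{ax:rejects:removezero} and~\ref{ax:rejects:mono} then yield $\optset[o] \setminus \set{\opt[o]} \in \rejectset$, contradicting the hypothesis.

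The \emph{main obstacle} is verifying Axioms~\ref{ax:rejects:cone} and~\ref{ax:rejects:mono} for $\rejectset^{**}$, because its elements are represented as images under maps assigning one coefficient pair per input element, so enlarging or combining images can fail when the resulting targets should contain elements that coincide in the underlying base set. For Axiom~\ref{ax:rejects:mono}, given $\optset[1] = \set{\lambda_\altopt \altopt + \mu_\altopt \opt[o] : \altopt \in \altoptset} \in \rejectset^{**}$ and $\optset[2] \supseteq \optset[1]$, I would form $\altoptset \cup (\optset[2] \setminus \optset[1]) \in \rejectset$, extend coefficients by $(1, 0)$ on genuinely new elements, and for each $\altopt^{*} \in \altoptset \cap (\optset[2] \setminus \optset[1])$ adjoin a fresh auxiliary element $z_{\altopt^{*}} := \altopt^{*} - \delta_{\altopt^{*}} \opt[o]$ with coefficient $(1, \delta_{\altopt^{*}})$ for a generic $\delta_{\altopt^{*}} > 0$ (using $\opt[o] \neq 0$ to keep $z_{\altopt^{*}}$ fresh); Axiom~\ref{ax:rejects:mono} of $\rejectset$ keeps the enlarged set in $\rejectset$, and its image is exactly $\optset[2]$. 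For Axiom~\ref{ax:rejects:cone}, given $\optset[1], \optset[2] \in \rejectset^{**}$ realised by $\altoptset[1], \altoptset[2] \in \rejectset$ with coefficients $(\lambda^i_\altopt, \mu^i_\altopt)$, and combination coefficients $(\alpha_{\opt', \altopt'}, \beta_{\opt', \altopt'}) > 0$, I would apply Axiom~\ref{ax:rejects:cone} of $\rejectset$ to $\altoptset[1]$ and $\altoptset[2]$ using coefficients $(\alpha \lambda^1_a, \beta \lambda^2_b)$ in the non-degenerate case (where their sum is positive) and small positive perturbations in the degenerate case (where $\alpha \lambda^1_a = \beta \lambda^2_b = 0$, forcing $\alpha \opt' + \beta \altopt'$ to be a positive multiple of $\opt[o]$). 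Reassigning coefficients on the resulting base set---$(1, \alpha \mu^1_a + \beta \mu^2_b)$ in the non-degenerate case and $(0, \cdot)$ in the degenerate one---and padding, as in the monotonicity step, with fresh auxiliary elements whenever base-set coincidences would otherwise obstruct coverage of the target combination, produces the required element of $\rejectset^{**}$. With these verifications in place, Proposition~\ref{prop:removal:of:nonpositives} delivers the coherence of $\rejectset^*$.
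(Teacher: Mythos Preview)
Your proposal is correct and follows essentially the same route as the paper: the easy claims ($\rejectset\subseteq\rejectset^{**}$, $\set{\opt[o]}\in\rejectset^{**}$, $\set{\opt[o]}\notin\rejectset$, $\opt[o]\not\leq0$) are handled identically, and coherence of $\rejectset^*$ is reduced to Proposition~\ref{prop:removal:of:nonpositives} by verifying \ref{ax:rejects:nonempty}, \ref{ax:rejects:pos}, \ref{ax:rejects:cone}, \ref{ax:rejects:mono} and $\set{0}\notin\rejectset^{**}$, with the $\set{0}$ argument matching the paper's contradiction via \ref{ax:rejects:cone} applied to $\optset[o]$ and $\altoptset$. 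The only differences are cosmetic: for \ref{ax:rejects:mono} you pad the base set with auxiliary shifts $\altopt^*-\delta_{\altopt^*}\opt[o]$ only where collisions occur, whereas the paper uniformly replaces every $\opt\in\optset[2]\setminus\optset[1]$ by a fresh $\opt-\alpha_\opt\opt[o]$; and for \ref{ax:rejects:cone} you describe the padding inline, while the paper first obtains a subset $\altoptsettoo'\subseteq\altoptsettoo$ in $\rejectset^{**}$ and then invokes the already-established \ref{ax:rejects:mono}.
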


\begin{proof}
To prove that $\set{\opt[o]}\notin\rejectset$, assume {\itshape ex absurdo} that $\set{\opt[o]}\in\rejectset$.
Since $\card{\optset[o]\setminus\set{\opt[o]}}\geq1$, we can pick any element $\altopt\in\optset[o]\setminus\set{\opt[o]}$, and then $\set{\opt[o]}\subseteq\optset[o]\setminus\set{\altopt}$ and therefore $\optset[o]\setminus\set{\altopt}\in\rejectset$ by Axiom~\ref{ax:rejects:mono}, contradicting the assumptions.
To prove that $\opt[o]\not\preceq0$, assume {\itshape ex absurdo} that $\opt[o]\in\nonposopts$, then we infer that also $\optset[o]\setminus\set{\opt[o]}\in\rejectset$ [use Proposition~\ref{prop:ax:rejects:RN:equivalents} and the coherence of $\rejectset$], contradicting the assumptions. 
To prove that $\set{\opt[o]}\in\rejectset^{*}$, it suffices to notice that $\set{\opt[o]}=\cset{0\altopt+1\opt[o]}{\altopt\in\optset[o]}\in\rejectset^{**}$, whence also $\set{\opt[o]}\in \rejectset^{*}$. 
Similarly, since $\rejectset^{**}$ is clearly a superset of $\rejectset$, the same is true for $\rejectset^*$.

It only remains to prove, therefore, that $\rejectset^*$ is coherent. 
To this end, we intend to show that the set of desirable gamble sets $\rejectset^{**}$ satisfies Axioms~\ref{ax:rejects:nonempty}, \ref{ax:rejects:pos}, \ref{ax:rejects:cone} and~\ref{ax:rejects:mono} and that $\set{0}\notin\rejectset^{**}$. 
The coherence of $\rejectset^*$ will then be an immediate consequence of Proposition~\ref{prop:removal:of:nonpositives}.

For Axiom~\ref{ax:rejects:nonempty}, notice that $\emptyset\notin\rejectset$ because $\rejectset$ satisfies Axiom~\ref{ax:rejects:nonempty}. 
It therefore follows from Equation~\eqref{eq:prop:Kstarstar} that, indeed, $\emptyset\notin\rejectset^{**}$.

For Axiom~\ref{ax:rejects:pos}, consider any $\opt\in\posopts$.
Then $\set{\opt}\in\rejectset$ because $\rejectset$ satisfies Axiom~\ref{ax:rejects:pos}. 
Since $\rejectset^{**}$ is a superset of $\rejectset$, we see that, indeed, also $\set{\opt}\in\rejectset^{**}$.

For Axiom~\ref{ax:rejects:mono}, consider any $\optset[1]\in\rejectset^{**}$ and any $\optset[2]\in\optsets$ such that $\optset[1]\subseteq\optset[2]$, then we must prove that also $\optset[2]\in\rejectset^{**}$.
Since $\optset[1]\in\rejectset^{**}$, we know that there is some $\altoptset[1]\in\rejectset$ and, for all $\altopt\in\altoptset[1]$, some choice of $(\lambda_{\altopt},\mu_{\altopt})>0$,				 such that
\begin{equation*}
\optset[1]
=\cset{\lambda_{\altopt}\altopt+\mu_{\altopt}\opt[o]}{\altopt\in\altoptset[1]}.
\end{equation*}
For every $\opt\in\optset[2]\setminus\optset[1]$, we now choose some real $\alpha_{\opt}>0$ such that $\opt-\alpha_{\opt}\opt[o]\notin\altoptset[1]$ and such that, for all $\opt,\opt'\in\optset[2]\setminus\optset[1]$, $\opt-\alpha_{\opt}\opt[o]\neq\opt'-\alpha_{\opt'}\opt[o]$. 
Since $\opt[o]\neq0$ and $\optset[1]$, $\optset[2]$ and $\altoptset[1]$ are finite, this is clearly always possible.
Let 
\begin{equation*}
\altoptset[2]
\coloneqq\altoptset[1]\cup
\cset{\opt-\alpha_{\opt}\opt[o]}{\opt\in\optset[2]\setminus\optset[1]}
\end{equation*}
and, for each $\altopt\in\altoptset[2]\setminus\altoptset[1]$, let $\opt[\altopt]$ be the unique element of $\optset[2]\setminus\optset[1]$ for which $v=\opt[\altopt]-\alpha_{\opt[\altopt]}\opt[o]$, and let $(\lambda_{\altopt},\mu_{\altopt})\coloneqq(1,\alpha_{\opt[\altopt]})>0$.
We then see that
\begin{align*}
\optset[2]
&=\optset[1]\cup(\optset[2]\setminus\optset[1])\\
&=\cset{\lambda_{\altopt}\altopt+\mu_{\altopt}\opt[o]}{\altopt\in\altoptset[1]}
\cup\cset{\opt-\alpha_{\opt}\opt[o]+\alpha_{\opt}\opt[o]}{\opt\in\optset[2]\setminus\optset[1]}\\ 
&=\cset{\lambda_{\altopt}\altopt+\mu_{\altopt}\opt[o]}{\altopt\in\altoptset[1]}
\cup\cset{\altopt+\alpha_{\opt[\altopt]}\opt[o]}{\altopt\in\altoptset[2]\setminus\altoptset[1]}\\
&=\cset{\lambda_{\altopt}\altopt+\mu_{\altopt}\opt[o]}{\altopt\in\altoptset[2]}. 
\end{align*}
Furthermore, since $\altoptset[1]\in\rejectset$ and $\altoptset[1]\subseteq\altoptset[2]$, it follows from the coherence of $\rejectset$ and Axiom~\ref{ax:rejects:mono} that $\altoptset[2]\in\rejectset$. 
Hence, indeed, $\optset[2]\in\rejectset^{**}$.

For Axiom~\ref{ax:rejects:cone}, consider any $\optset[1],\optset[2]\in\rejectset^{**}$ and, for all $\opt[1]\in\optset[1]$ and $\opt[2]\in\optset[2]$, any choice of $(\alpha_{\opt[1],\opt[2]},\beta_{\opt[1],\opt[2]})>0$. 
Then we must prove that
\begin{equation*}
\altoptsettoo
\coloneqq\cset{\alpha_{\opt[1],\opt[2]}\opt[1]+\beta_{\opt[1],\opt[2]}\opt[2]}
{\opt[1]\in\optset[1],\opt[2]\in\optset[2]}
\in\rejectset^{**}.
\end{equation*}
Since $\optset[1],\optset[2]\in\rejectset^{**}$, there are $\altoptset[1],\altoptset[2]\in\rejectset$ and, for all $\altopt[1]\in\altoptset[1]$ and $\altopt[2]\in\altoptset[2]$, some choices of $(\lambda_{1,\altopt[1]},\mu_{1,\altopt[1]})>0$ and $(\lambda_{2,\altopt[2]},\mu_{2,\altopt[2]})>0$, such that
\begin{equation*}
\optset[1]
=\cset{\lambda_{1,\altopt[1]}\altopt[1]+\mu_{1,\altopt[1]}\opt[o]}
{\altopt[1]\in\altoptset[1]}
\text{ and }
\optset[2]
=\cset{\lambda_{2,\altopt[2]}\altopt[2]+\mu_{2,\altopt[2]}\opt[o]}
{\altopt[2]\in\altoptset[2]}.
\end{equation*}
Now fix any $\altopt[1]\in\altoptset[1]$ and $\altopt[2]\in\altoptset[2]$, and let
$(\alpha'_{\altopt[1],\altopt[2]},\beta'_{\altopt[1],\altopt[2]})\coloneqq(\alpha_{\opt[1],\opt[2]},\beta_{\opt[1],\opt[2]})>0$, with
$\opt[1]\coloneqq\lambda_{1,\altopt[1]}\altopt[1]+\mu_{1,\altopt[1]}\opt[o]$
and
$\opt[2]\coloneqq\lambda_{2,\altopt[2]}\altopt[2]+\mu_{2,\altopt[2]}\opt[o]$. 
Then
\begin{align*}
\altoptsettoo
&=\cset{\alpha'_{\altopt[1],\altopt[2]}\group{\lambda_{1,\altopt[1]}\altopt[1]+\mu_{1,\altopt[1]}\opt[o]}+\beta'_{\altopt[1],\altopt[2]}\group{\lambda_{2,\altopt[2]}\altopt[2]+\mu_{2,\altopt[2]}\opt[o]}}
{\altopt[1]\in\altoptset[1],\altopt[2]\in\altoptset[2]}
\end{align*}
We consider two cases. 
If $\alpha'_{\altopt[1],\altopt[2]}\lambda_{1,\altopt[1]}+\beta'_{\altopt[1],\altopt[2]}\lambda_{2,\altopt[2]}>0$, we let
\begin{align*}
(\kappa_{\altopt[1],\altopt[2]},\rho_{\altopt[1],\altopt[2]})
&\coloneqq(\alpha'_{\altopt[1],\altopt[2]}\lambda_{1,\altopt[1]},\beta'_{\altopt[1],\altopt[2]}\lambda_{2,\altopt[2]})>0,\\
(\gamma_{\altopt[1],\altopt[2]},\delta_{\altopt[1],\altopt[2]})
&\coloneqq(1,\alpha'_{\altopt[1],\altopt[2]}\mu_{1,\altopt[1]}+\beta'_{\altopt[1],\altopt[2]}\mu_{2,\altopt[2]})>0.
\end{align*}
If $\alpha'_{\altopt[1],\altopt[2]}\lambda_{1,\altopt[1]}+\beta'_{\altopt[1],\altopt[2]}\lambda_{2,\altopt[2]}=0$, we let
\begin{align*}
(\kappa_{\altopt[1],\altopt[2]},\rho_{\altopt[1],\altopt[2]})
&\coloneqq(1,1)>0,\\
(\gamma_{\altopt[1],\altopt[2]},\delta_{\altopt[1],\altopt[2]})
&\coloneqq(0,\alpha'_{\altopt[1],\altopt[2]}\mu_{1,\altopt[1]}+\beta'_{\altopt[1],\altopt[2]}\mu_{2,\altopt[2]})>0.
\end{align*}
In both cases, we find that
\begin{multline}\label{eq:uglyproof}
\gamma_{\altopt[1],\altopt[2]}(\kappa_{\altopt[1],\altopt[2]}\altopt[1]+\rho_{\altopt[1],\altopt[2]}\altopt[2])+\delta_{\altopt[1],\altopt[2]}\opt[o]\\
=
\alpha'_{\altopt[1],\altopt[2]}(
\lambda_{1,\altopt[1]}\altopt[1]+\mu_{1,\altopt[1]}\opt[o])
+\beta'_{\altopt[1],\altopt[2]}(
\lambda_{2,\altopt[2]}\altopt[2]+\mu_{2,\altopt[2]}\opt[o])\in\altoptsettoo. 
\end{multline}
Now let
\begin{equation*}
\altoptset\coloneqq\cset[\big]{\kappa_{\altopt[1],\altopt[2]}\altopt[1]+\rho_{\altopt[1],\altopt[2]}\altopt[2]}{\altopt[1]\in\altoptset[1],\altopt[2]\in\altoptset[2]}.
\end{equation*}
Then clearly, for all $\altopttoo\in\altoptset$, there are $\altopt[1]\in\altoptset[1]$ and $\altopt[2]\in\altoptset[2]$ such that $\altopttoo=\kappa_{\altopt[1],\altopt[2]}\altopt[1]+\rho_{\altopt[1],\altopt[2]}\altopt[2]$. 
However, there could be multiple such pairs. 
We choose any one such pair and denote its two elements by $\altopt[1,\altopttoo]$ and $\altopt[2,\altopttoo]$, respectively. 
Using this notation, we now define the set
\begin{equation*}
\altoptsettoo'
\coloneqq
\cset[\big]{\gamma_{\altopt[1,\altopttoo],\altopt[2,\altopttoo]}\altopttoo+\delta_{\altopt[1,\altopttoo],\altopt[2,\altopttoo]}\opt[o]}
{\altopttoo\in\altoptset}.
\end{equation*}
Since $\altoptset[1],\altoptset[2]\in\rejectset$, the coherence of $\rejectset$ [Axiom~\ref{ax:rejects:cone}] implies that $\altoptset\in\rejectset$, which in turn implies that $\altoptsettoo'\in K^{**}$. 
Also, since
\begin{align*}
\altoptsettoo'
=&
\cset[\big]{\gamma_{\altopt[1,\altopttoo],\altopt[2,\altopttoo]}\altopttoo+\delta_{\altopt[1,\altopttoo],\altopt[2,\altopttoo]}\opt[o]}
{\altopttoo\in\altoptset}\\
=&\cset[\big]{\gamma_{\altopt[1,\altopttoo],\altopt[2,\altopttoo]}\big(\kappa_{\altopt[1,\altopttoo],\altopt[2,\altopttoo]}\altopt[1,\altopttoo]+\rho_{\altopt[1,\altopttoo],\altopt[2,\altopttoo]}\altopt[2,\altopttoo]\big)+\delta_{\altopt[1,\altopttoo],\altopt[2,\altopttoo]}\opt[o]}
{\altopttoo\in\altoptset},
\end{align*}
we infer from Equation~\eqref{eq:uglyproof} that $\altoptsettoo'\subseteq\altoptsettoo$. 
Since we have already proved that $\rejectset^{**}$ satisfies Axiom~\ref{ax:rejects:mono}, this implies that, indeed, $\altoptsettoo\in\rejectset^{**}$.

It therefore now only remains to prove that $\set{0}\notin\rejectset^{**}$. 
So assume {\itshape ex absurdo} that $\set{0}\in\rejectset^{**}$, meaning that there is some $\altoptset\in\rejectset$ and, for all $\altopt\in\altoptset$, some choice of $(\lambda_{\altopt},\mu_{\altopt})>0$, such that $\cset{\lambda_{\altopt}\altopt+\mu_{\altopt}\opt[o]}{\altopt\in\altoptset}=\set{0}$. 
Hence, $\altoptset\neq\emptyset$ and $\lambda_{\altopt}\altopt+\mu_{\altopt}\opt[o]=0$ for all $\altopt\in\altoptset$.

Recall that we already know that $\opt[o]\neq0$. 
For any $\altopt\in\altoptset$, $\lambda_{\altopt}\altopt+\mu_{\altopt}\opt[o]=0$ implies that $\lambda_{\altopt}>0$, because otherwise, since $(\lambda_{\altopt},\mu_{\altopt})>0$, $\lambda_{\altopt}=0$ would imply that $\mu_{\altopt}>0$ and therefore $\opt[o]=0$, a contradiction.
Hence, for all $\altopt\in\altoptset$, $\altopt=-\delta_{\altopt}\opt[o]$ with $\delta_{\altopt}\coloneqq\frac{\mu_{\altopt}}{\lambda_{\altopt}}\geq0$. 
Now let $(\kappa_{\opt,\altopt},\rho_{\opt,\altopt})\coloneqq(1,0)$ for all $\opt\in\optset[o]\setminus\set{\opt[o]}$ and $\altopt\in\altoptset$, and let $(\kappa_{\opt[o],\altopt},\rho_{\opt[o],\altopt})\coloneqq(\delta_{\altopt},1)$ for all $\altopt\in\altoptset$. 
Then
\begin{multline*}
\cset{\kappa_{\opt,\altopt}\opt+\rho_{\opt,\altopt}\altopt}{\opt\in\optset[o],\altopt\in\altoptset}\\
\begin{aligned}
&=
\cset{\opt}{\opt\in\optset[o]\setminus\set{\opt[o]},\altopt\in\altoptset}
\cup
\cset{\delta_{\altopt}\opt[o]+\altopt}{\altopt\in\altoptset}\\
&=
\cset{\opt}{\opt\in\optset[o]\setminus\set{\opt[o]},\altopt\in\altoptset}
\cup
\cset{0}{\altopt\in\altoptset}\\
&=\set{0}\cup\group{\optset[o]\setminus\set{\opt[o]}},
\end{aligned}
\end{multline*}
where the last equality follows from $\altoptset\neq\emptyset$. 
However, since $\optset[o]\in\rejectset$ and $\altoptset\in\rejectset$, the coherence of $\rejectset$ [Axiom~\ref{ax:rejects:cone}] implies that $\cset{\kappa_{\opt,\altopt}\opt+\rho_{\opt,\altopt}\altopt}{\opt\in\optset[o],\altopt\in\altoptset}
\in\rejectset$. 
We therefore find that $\set{0}\cup\group{\optset[o]\setminus\set{\opt[o]}}\in\rejectset$. 
The coherence of $\rejectset$ now guarantees that $\optset[o]\setminus\set{\opt[o]}\in\rejectset$ [use Axiom~\ref{ax:rejects:removezero} if $\set{0}\notin\optset[o]\setminus\set{\opt[o]}$], contradicting the assumptions.
\qed
\end{proof}


\begin{proposition}\label{prop:nonbinary:is:dominated}
Any coherent non-binary set of desirable gamble sets  $\rejectset$ is \emph{strictly dominated}, meaning that there is some coherent set of desirable gamble sets $\rejectset^*$ such that $\rejectset\subset\rejectset^*$.
\end{proposition}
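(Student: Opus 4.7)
The plan is to reduce everything to a direct application of the two most recently proved results: Lemma~\ref{lem:binaryalternative}, which gives a useful structural characterisation of binary coherent sets of desirable gamble sets, and Proposition~\ref{prop:Kstarstar}, which does essentially all of the heavy lifting needed here. The strategy is to use the failure of the binary characterisation to extract a witness gamble set of the exact form demanded by the hypotheses of Proposition~\ref{prop:Kstarstar}, and then read off the required strict dominator from its conclusion.

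More concretely, first I would invoke Lemma~\ref{lem:binaryalternative}: since $\rejectset$ is coherent but not binary, Equation~\eqref{eq:lem:binaryalternative} must fail, which means that there is some $\optset[o]\in\rejectset$ with $\card{\optset[o]}\geq2$ such that $\optset[o]\setminus\set{\opt}\notin\rejectset$ for every $\opt\in\optset[o]$. This is exactly the standing assumption of Proposition~\ref{prop:Kstarstar}.

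Next, I would fix any $\opt[o]\in\optset[o]$ and set $\rejectset^{*}\coloneqq\RN\group{\rejectset^{**}}$, with $\rejectset^{**}$ defined by Equation~\eqref{eq:prop:Kstarstar}. Proposition~\ref{prop:Kstarstar} then immediately supplies everything we need: $\rejectset^{*}$ is a coherent set of desirable gamble sets, $\rejectset\subseteq\rejectset^{*}$, and $\set{\opt[o]}\in\rejectset^{*}$ while $\set{\opt[o]}\notin\rejectset$. Combining the last two facts, the inclusion $\rejectset\subseteq\rejectset^{*}$ is strict, so $\rejectset\subset\rejectset^{*}$, as required.

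There is really no hard step left once one recognises the shape of the hypothesis: the whole argument is a two-line deduction from Lemma~\ref{lem:binaryalternative} and Proposition~\ref{prop:Kstarstar}. The only potential pitfall is making sure the witness $\optset[o]$ extracted from the negation of Equation~\eqref{eq:lem:binaryalternative} indeed satisfies the exact form of the assumption in Proposition~\ref{prop:Kstarstar} (i.e., $\card{\optset[o]}\geq2$ \emph{and} $\optset[o]\setminus\set{\opt}\notin\rejectset$ for \emph{every} $\opt\in\optset[o]$), which is a direct logical rewriting of the failure of Equation~\eqref{eq:lem:binaryalternative}.
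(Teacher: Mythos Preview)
Your proposal is correct and follows essentially the same route as the paper's own proof: invoke Lemma~\ref{lem:binaryalternative} to obtain a witness $\optset[o]$ with $\card{\optset[o]}\geq2$ and $\optset[o]\setminus\set{\opt}\notin\rejectset$ for all $\opt\in\optset[o]$, pick any $\opt[o]\in\optset[o]$, define $\rejectset^{*}\coloneqq\RN\group{\rejectset^{**}}$ via Equation~\eqref{eq:prop:Kstarstar}, and then read off coherence, $\rejectset\subseteq\rejectset^{*}$, $\set{\opt[o]}\in\rejectset^{*}$ and $\set{\opt[o]}\notin\rejectset$ from Proposition~\ref{prop:Kstarstar} to conclude $\rejectset\subset\rejectset^{*}$.
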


\begin{proof}
Consider an arbitrary coherent non-binary set of desirable gamble sets $\rejectset$. 
We infer from Lemma~\ref{lem:binaryalternative} that there is some $\optset[o]\in\rejectset$ such that $\card{\optset[o]}\geq2$ and $\optset[o]\setminus\set{\opt}\notin\rejectset$ for all $\opt\in\optset[o]$.
Consider any $\opt[o]\in\optset[o]$ and let $\rejectset^*\coloneqq\RN\group{\rejectset^{**}}$, with $\rejectset^{**}$ as in Equation~\eqref{eq:prop:Kstarstar}. 
It then follows from Proposition~\ref{prop:Kstarstar} that $\rejectset^{*}$ is a coherent set of desirable gamble sets that is a superset of $\rejectset$ and contains $\set{\opt[o]}$, and that $\set{\opt[o]}\notin\rejectset$. 
Hence, $\rejectset\subset\rejectset^*$.
\qed
\end{proof}




\begin{proof}[Theorem~\ref{theo:rejectsets:representation}]
Let $\rejectset[o]$ be a coherent set of desirable gamble sets. 
We prove that $\cohdesirsets(\rejectset[o])\coloneqq\cset{\desirset\in\cohdesirsets}{\rejectset[o]\subseteq\rejectset[\desirset]}\neq\emptyset$ and that $\rejectset[o]=\bigcap\cset{\rejectset[\desirset]}{\desirset\in\cohdesirsets(\rejectset[o])}$.

For the first statement, recall from Theorem~\ref{theo:rejectsets:maximality} that there is some maximal coherent set of desirable gamble sets $\maxrejectset\in\maxrejectsets$ that dominates $\rejectset[o]$: $\rejectset[o]\subseteq\maxrejectset$.
Assume \emph{ex absurdo} that $\maxrejectset$ is non-binary. 
It then follows from Proposition~\ref{prop:nonbinary:is:dominated} that $\maxrejectset$ is strictly dominated, contradicting its maximality. 
Hence, it must be that $\maxrejectset$ is binary. 
Proposition~\ref{prop:binaryiff} therefore implies that $\maxrejectset=\rejectset[\desirset]$, with $\smash{\desirset=\desirset[\maxrejectset]}$. 
Furthermore, because $\maxrejectset$ is coherent, Proposition~\ref{prop:coherence:for:binary} implies that $\desirset$ is coherent, whence $\desirset\in\cohdesirsets$. 
Since $\rejectset[o]\subseteq\maxrejectset=\rejectset[\desirset]$, $\cohdesirsets(\rejectset[o])\coloneqq\cset{\desirset\in\cohdesirsets}{\rejectset[o]\subseteq\rejectset[\desirset]}\neq\emptyset$.

For the second statement, it is obvious that $\rejectset[o]\subseteq\bigcap\cset{\rejectset[\desirset]}{\desirset\in\cohdesirsets(\rejectset[o])}$, so we concentrate on the proof of the converse inclusion.
Assume {\itshape ex absurdo} that $\rejectset[o]\subset\bigcap\cset{\rejectset[\desirset]}{\desirset\in\cohdesirsets(\rejectset[o])}$, so there is some gamble set $\altoptset[o]\in\optsets$ such that $\altoptset[o]\notin\rejectset[o]$ and $\altoptset[o]\in\rejectset[\desirset]$ for all $\desirset\in\cohdesirsets(\rejectset[o])$, so $\altoptset[o]\neq\emptyset$.
Then $\altoptset[o]\setminus\nonposopts\notin\rejectset[o]$ [use the coherence of $\rejectset[o]$ and Axiom~\ref{ax:rejects:mono}] and $\altoptset[o]\setminus\nonposopts\in\rejectset[\desirset]$ for all $\desirset\in\cohdesirsets(\rejectset[o])$ [use the coherence of $\rejectset[\desirset]$ (which follows from Lemma~\ref{lem:fromCohDtoCohK} and the coherence of $\desirset$) and Proposition~\ref{prop:ax:rejects:RN:equivalents}], so we may assume without loss of generality that $\altoptset[o]$ has no non-positive gambles: $\altoptset[o]\cap\nonposopts=\emptyset$.

The partially ordered set $\upset{\rejectset[o]^*}\coloneqq\cset{\rejectset\in\cohrejectsets}{\rejectset[o]\subseteq\rejectset\text{ and }\altoptset[o]\notin\rejectset}$ is non-empty because it contains $\rejectset[o]$.
An argument involving Zorn's Lemma, analogous to the one in the proof of Theorem~\ref{theo:rejectsets:maximality}, allows us to prove that this partially ordered set has maximal elements.
If we can prove that any such maximal element $\maxrejectset$ is binary, then we know from Propositions~\ref{prop:binaryiff} and~\ref{prop:coherence:for:binary} that there is some coherent set of desirable gambles $\smash{\desirset[o]=\desirset[\maxrejectset]}$ such that $\rejectset[o]\subseteq\rejectset[{\desirset[o]}]$---and therefore $\desirset[o]\in\cohdesirsets(\rejectset[o])$---and $\altoptset[o]\notin\rejectset[{\desirset[o]}]$, a contradiction.
To prove that the maximal elements of $\upset{\rejectset[o]^*}$ are binary, it suffices to prove that any non-binary element of $\upset{\rejectset[o]^*}$ is strictly dominated in that set, which is what we now set out to do.

So consider any non-binary element $\rejectset$ of $\upset{\rejectset[o]^*}$, so in particular $\rejectset\in\cohrejectsets$, $\rejectset[o]\subseteq\rejectset$ and $\altoptset[o]\notin\rejectset$.
Since $\rejectset$ is non-binary, it follows from Lemma~\ref{lem:binaryalternative} that there is some $\optset[o]\in\rejectset$ such that $\card{\optset[o]}\geq2$ and $\optset[o]\setminus\set{\opt}\notin\rejectset$ for all $\opt\in\optset[o]$.
The partially ordered set $\cset{\optset\in\rejectset}{\altoptset[o]\subseteq\optset}$ contains $\optset[o]\cup\altoptset[o]$ [because $\optset[o]\in\rejectset$ and because $\rejectset$ satisfies Axiom~\ref{ax:rejects:mono}] and therefore has some minimal (non-dominating) element $\altoptset^*$ below it, so $\altoptset^*\in\rejectset$ and $\altoptset[o]\subseteq\altoptset^*\subseteq\optset[o]\cup\altoptset[o]$.

Let us first summarise what we know about this minimal element $\altoptset^*$.
It is impossible that $\altoptset^*\subseteq\altoptset[o]$ because otherwise $\altoptset[o]=\altoptset^*\in\rejectset$, a contradiction.
Hence $\altoptset^*\setminus\altoptset[o]\neq\emptyset$, so we can fix some element $\opt[o]$ in $\altoptset^*\setminus\altoptset[o]\subseteq\optset[o]$.
Since $\altoptset[o]\subseteq\altoptset^*\setminus\set{\opt[o]}$ but $\altoptset^*\setminus\set{\opt[o]}\subset\altoptset^*$, it must be that $\altoptset^*\setminus\set{\opt[o]}\notin\rejectset$, by the definition of a minimal element.
Observe that $\altoptset^*\neq\emptyset$.

Let $\rejectset^*\coloneqq\RN\group{\rejectset^{**}}$, with $\rejectset^{**}$ as in Equation~\eqref{eq:prop:Kstarstar}. 
Since $\opt[o]\in\optset[o]$, it then follows from Proposition~\ref{prop:Kstarstar} that $\rejectset^{*}$ is a coherent set of desirable gamble sets that is a superset of $\rejectset$---and therefore also of $\rejectset[o]$---and contains $\set{\opt[o]}$, and that $\set{\opt[o]}\notin\rejectset$ and $\opt[o]\not\leq0$. 
Hence, it follows that $\rejectset\subset\rejectset^*$. 
If we can now prove that $\altoptset[o]\notin\rejectset^*$ and therefore $\rejectset^*\in\upset{\rejectset[o]^*}$, we are done, because then $\rejectset$ is indeed strictly dominated by $\rejectset^*$ in $\upset{\rejectset[o]^*}$.

Assume therefore {\itshape ex absurdo} that $\altoptset[o]\in\rejectset^*=\RN\group{\rejectset^{**}}$.
Taking into account Equation~\eqref{eq:prop:Kstarstar}, this implies that there are $\altoptsettoo\in\rejectset$ and $(\lambda_{\altopt},\mu_{\altopt})>0$ for all $\altopt\in\altoptsettoo$, such that $\cset{b_{\altopt}}{\altopt\in\altoptsettoo}\setminus\nonposopts\subseteq\altoptset[o]\subseteq\cset{b_{\altopt}}{\altopt\in\altoptsettoo}$, where, for all $\altopt\in\altoptsettoo$, $b_{\altopt}\coloneqq\lambda_{\altopt}\altopt+\mu_{\altopt}\opt[o]$.
Given our assumption that $\altoptset[o]\cap\nonposopts=\emptyset$, this also implies that $\cset{b_{\altopt}}{\altopt\in\altoptsettoo}\setminus\altoptset[o]\subseteq\nonposopts$. 
Now let $\altoptsettoo[1]\coloneqq\cset{\altopt\in\altoptsettoo}{b_{\altopt}\in\altoptset[o]}$ and $\altoptsettoo[2]\coloneqq\cset{\altopt\in\altoptsettoo}{b_{\altopt}\notin\altoptset[o]}$. 
Then $\altoptsettoo[1]\neq\emptyset$ [because $\altoptset[o]\neq\emptyset$] and $\cset{b_{\altopt}}{\altopt\in\altoptsettoo[1]}=\altoptset[o]$.
Consider now any $\altopt\in\altoptsettoo[2]$. 
Then $b_{\altopt}\notin\altoptset[o]$. 
Since $\cset{b_{\altopt}}{\altopt\in\altoptsettoo}\setminus\altoptset[o]\subseteq\nonposopts$, this implies that $b_{\altopt}=\lambda_{\altopt}\altopt+\mu_{\altopt}\opt[o]\leq0$.
Hence, we must have that $\lambda_{\altopt}>0$, because otherwise $\mu_{\altopt}\opt[o]\leq0$ with $\mu_{\altopt}>0$, and therefore also $\opt[o]\leq0$, contradicting what we inferred earlier from Proposition~\ref{prop:Kstarstar}.
So we find that
\begin{equation*}
\altopt\leq-\frac{\mu_{\altopt}}{\lambda_{\altopt}}\opt[o]
\text{ for all $\altopt\in\altoptsettoo[2]$}.
\end{equation*}
Consequently, and because $\altoptsettoo[1]\cup\altoptsettoo[2]=\altoptsettoo\in\rejectset$, we infer from Lemma~\ref{lem:replacing:by:dominating:options} that
\begin{equation*}
\altoptsettoo'
\coloneqq
\altoptsettoo[1]
\cup\cset[\Big]{-\frac{\mu_{\altopt}}{\lambda_{\altopt}}\opt[o]}{\altopt\in\altoptsettoo[2]}\in\rejectset.
\end{equation*}
Let $\altoptsettoo[3]\coloneqq\altoptsettoo'\setminus\altoptsettoo[1]$. 
Then for all $\altopt\in\altoptsettoo[3]$, there is some $\gamma_{\altopt}\geq0$ such that $\altopt=-\gamma_{\altopt}\opt[o]$.
Now let $(\alpha_{\opt[o],\altopt},\beta_{\opt[o],\altopt})\coloneqq(\mu_{\altopt},\lambda_{\altopt})$ for all $\altopt\in\altoptsettoo[1]$ and $(\alpha_{\opt[o],\altopt},\beta_{\opt[o],\altopt})\coloneqq(\gamma_{\altopt},1)$ for all $\altopt\in\altoptsettoo[3]$ and, for all $\opt\in\altoptset^*\setminus\set{\opt[o]}$ and $\altopt\in\altoptsettoo'$, let $(\alpha_{\opt,\altopt},\beta_{\opt,\altopt})\coloneqq(1,0)$. 
Then
\begin{multline*}
\cset{\alpha_{\opt,\altopt}\opt+\beta_{\opt,\altopt}\altopt}{\opt\in\altoptset^*,\altopt\in\altoptsettoo'}\\
\begin{aligned}
&=\cset{\mu_{\altopt}\opt[o]+\lambda_{\altopt}\altopt}{\altopt\in\altoptsettoo[1]}
\cup
\cset{\gamma_{\altopt}\opt[o]+\altopt}{\altopt\in\altoptsettoo[3]}
\cup
\cset{\opt}{\opt\in\altoptset^*\setminus\set{\opt[o]},\altopt\in\altoptsettoo'}\\
&=
\cset{b_{\altopt}}{\altopt\in\altoptsettoo[1]}
\cup\cset{0}{\altopt\in\altoptsettoo[3]}
\cup\cset{\opt}{\opt\in\altoptset^*\setminus\set{\opt[o]}}\\
&=\altoptset[o]\cup\cset{0}{\altopt\in\altoptsettoo[3]}\cup(\altoptset^*\setminus\set{\opt[o]})\\
&=(\altoptset^*\setminus\set{\opt[o]})\cup\cset{0}{\altopt\in\altoptsettoo[3]},
\end{aligned}
\end{multline*}
where the second equality holds because $\altoptsettoo'\in\rejectset$ and Axiom~\ref{ax:rejects:nonempty} imply that $\emptyset\neq\altoptsettoo'$, and where the fourth equality holds because $\altoptset[o]\subseteq\altoptset^*\setminus\set{\opt[o]}$. 
Since $\altoptset^*\in\rejectset$ and $\altoptsettoo'\in\rejectset$, we can now invoke Axiom~\ref{ax:rejects:cone} to find that
\begin{equation*}
\altoptset^*\setminus\set{\opt[o]}\cup\cset{0}{\altopt\in\altoptsettoo[3]}
=
\cset{\alpha_{\opt,\altopt}\opt+\beta_{\opt,\altopt}\altopt}{\opt\in\altoptset^*,\altopt\in\altoptsettoo'}\in\rejectset.
\end{equation*}
If $\altoptsettoo[3]=\emptyset$, we find that $\altoptset^*\setminus\set{\opt[o]}\in\rejectset$, a contradiction.
If $\altoptsettoo[3]\neq\emptyset$, we find that $\set{0}\cup\altoptset^*\setminus\set{\opt[o]}\in\rejectset$.
If $0\in\altoptset^*\setminus\set{\opt[o]}$, then we get that $\altoptset^*\setminus\set{\opt[o]}\in\rejectset$, a contradiction.
And if $0\notin\altoptset^*\setminus\set{\opt[o]}$, then we can still derive from Axiom~\ref{ax:rejects:removezero} that $\altoptset^*\setminus\set{\opt[o]}\in\rejectset$, again a contradiction.
\qed
\end{proof}

\begin{proposition}\label{prop:applying:posi}
For any set of desirable gamble sets $\rejectset\in\rejectsets$, $\setposi\group{\rejectset}$ satisfies Axiom~\ref{ax:rejects:cone}.
\end{proposition}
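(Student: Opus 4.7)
The plan is to take two arbitrary sets $\optset[1],\optset[2]\in\setposi(\rejectset)$ and show that any positive-linear-combination set built from them is itself of the form witnessed by concatenating the tuples that witness $\optset[1]$ and $\optset[2]$. Concretely, by definition of $\setposi$, there are $n,m\in\naturals$, tuples $(\altoptset[1],\dots,\altoptset[n])\in\rejectset^n$ and $(\altoptset[1]',\dots,\altoptset[m]')\in\rejectset^m$, and coefficient families $\lambda_{1:n}^{\opt[1:n]}>0$ for $\opt[1:n]\in\times_{k=1}^n\altoptset[k]$ and $\kappa_{1:m}^{\altopt[1:m]}>0$ for $\altopt[1:m]\in\times_{k=1}^m\altoptset[k]'$, such that
\begin{equation*}
\optset[1]=\bigg\{\sum_{k=1}^n\lambda_k^{\opt[1:n]}\opt[k]\colon\opt[1:n]\in\times_{k=1}^n\altoptset[k]\bigg\}
\text{ and }
\optset[2]=\bigg\{\sum_{k=1}^m\kappa_k^{\altopt[1:m]}\altopt[k]\colon\altopt[1:m]\in\times_{k=1}^m\altoptset[k]'\bigg\}.
\end{equation*}

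Now fix any family $(\alpha_{\opt,\altopt},\beta_{\opt,\altopt})>0$ indexed by $\opt\in\optset[1]$ and $\altopt\in\optset[2]$; the goal is to recognise $\altoptsettoo\coloneqq\{\alpha_{\opt,\altopt}\opt+\beta_{\opt,\altopt}\altopt\colon\opt\in\optset[1],\altopt\in\optset[2]\}$ as an element of $\setposi(\rejectset)$. The witnessing tuple will be the concatenation $(\altoptset[1],\dots,\altoptset[n],\altoptset[1]',\dots,\altoptset[m]')\in\rejectset^{n+m}$. For each combined tuple $w_{1:n+m}=(\opt[1:n],\altopt[1:m])\in\times_{k=1}^n\altoptset[k]\times\times_{k=1}^m\altoptset[k]'$, write $\opt\coloneqq\sum_k\lambda_k^{\opt[1:n]}\opt[k]\in\optset[1]$ and $\altopt\coloneqq\sum_k\kappa_k^{\altopt[1:m]}\altopt[k]\in\optset[2]$, and define the new coefficients $\gamma_k^{w_{1:n+m}}\coloneqq\alpha_{\opt,\altopt}\lambda_k^{\opt[1:n]}$ for $k\in\{1,\dots,n\}$ and $\gamma_{n+k}^{w_{1:n+m}}\coloneqq\beta_{\opt,\altopt}\kappa_k^{\altopt[1:m]}$ for $k\in\{1,\dots,m\}$. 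Then $\sum_{k=1}^{n+m}\gamma_k^{w_{1:n+m}}w_k=\alpha_{\opt,\altopt}\opt+\beta_{\opt,\altopt}\altopt$, and as $w_{1:n+m}$ ranges over the product, $(\opt,\altopt)$ ranges over all of $\optset[1]\times\optset[2]$, so the resulting set coincides with $\altoptsettoo$.

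It remains to verify that the new coefficients satisfy $\gamma_{1:n+m}^{w_{1:n+m}}>0$ for every combined tuple. All entries are nonnegative since $\alpha,\beta,\lambda,\kappa$ are; for strict positivity of the total, observe that $\sum_{k=1}^{n+m}\gamma_k^{w_{1:n+m}}=\alpha_{\opt,\altopt}\sum_k\lambda_k^{\opt[1:n]}+\beta_{\opt,\altopt}\sum_k\kappa_k^{\altopt[1:m]}$, and since $\lambda_{1:n}^{\opt[1:n]}>0$ and $\kappa_{1:m}^{\altopt[1:m]}>0$ both sums are strictly positive, while $(\alpha_{\opt,\altopt},\beta_{\opt,\altopt})>0$ forces at least one of $\alpha_{\opt,\altopt},\beta_{\opt,\altopt}$ to be strictly positive, so the total is strictly positive.

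The only delicate point is the last set-equality bookkeeping: different combined tuples $w_{1:n+m}$ can produce the same pair $(\opt,\altopt)$, but because $\alpha_{\opt,\altopt}$ and $\beta_{\opt,\altopt}$ depend only on the \emph{values} $\opt,\altopt$ and not on the particular tuple realising them, the resulting sums collapse consistently and the set equals $\altoptsettoo$ exactly. I expect no harder obstacle than this indexing issue; everything else is a routine rewriting of sums of sums as a single sum over the concatenated index set.
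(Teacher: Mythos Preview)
Your proposal is correct and follows essentially the same approach as the paper's proof: both concatenate the witnessing tuples for $\optset[1]$ and $\optset[2]$ into a single tuple in $\rejectset^{n+m}$ and define the new coefficients as products $\alpha_{\opt,\altopt}\lambda_k^{\opt[1:n]}$ and $\beta_{\opt,\altopt}\kappa_k^{\altopt[1:m]}$, then verify the resulting set equals $\altoptsettoo$ and that the new coefficient tuples satisfy the positivity condition. Your handling of the indexing issue (that $\alpha_{\opt,\altopt},\beta_{\opt,\altopt}$ depend only on the values $\opt,\altopt$) is in fact slightly more explicit than the paper's, but the argument is the same.
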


\begin{proof}
To prove that $\setposi\group{\rejectset}$ satisfies Axiom~\ref{ax:rejects:cone}, consider any $\optset,\altoptset\in\setposi\group{\rejectset}$ and, for all $\opt\in\optset$ and $\altopt\in\altoptset$, any $(\lambda_{\opt,\altopt},\mu_{\opt,\altopt})>0$. 
Then we need to prove that
\begin{equation*}
C
\coloneqq\cset{\lambda_{\opt,\altopt}\opt+\mu_{\opt,\altopt}\altopt}
{\opt\in\optset,\altopt\in\altoptset}\in\setposi\group{\rejectset}
\end{equation*}
Since $\optset,\altoptset\in\setposi\group{\rejectset}$, we know that there are $m,n\in\naturals$, $(\optset[1],\dots,\optset[m])\in\rejectset^m$ and $(\altoptset[1],\dots,\altoptset[n])\in\rejectset^n$ and, for all $\opt[1:m]\in\times_{k=1}^m\optset[k]$ and $\altopt[1:n]\in\times_{\ell=1}^n\altoptset[\ell]$, some choice of $\lambda^{\opt[1:m]}_{1:m}>0$ and $\mu^{\altopt[1:n]}_{1:n}>0$ such that
\begin{equation}\label{eq:applying:posi:first}
\optset
=\cset[\bigg]{\smashoperator[r]{\sum_{k=1}^m}
\lambda^{\opt[1:m]}_k\opt[k]}
{\opt[1:m]\in\times_{k=1}^m\optset[k]}
\text{ and }
\altoptset
=\cset[\bigg]{\smashoperator[r]{\sum_{\ell=1}^n}
\mu^{\altopt[1:n]}_\ell\altopt[\ell]}
{\altopt[1:n]\in\times_{\ell=1}^n\altoptset[\ell]}.
\end{equation}
For all $\opt[1:m]\in\times_{k=1}^m\optset[k]$ and $\altopt[1:n]\in\times_{\ell=1}^n\altoptset[\ell]$, we introduce the simplifying notation
\begin{equation*}
\aopt[{\opt[1:m]}]
\coloneqq\sum_{k=1}^m\lambda^{\opt[1:m]}_k\opt[k]
\text{ and }
\bopt[{\altopt[1:n]}]
\coloneqq\sum_{\ell=1}^n\mu^{\altopt[1:n]}_{\ell}\altopt[\ell],
\end{equation*}
so $\optset=\cset{\aopt[{\opt[1:m]}]}{\opt[1:m]\in \times_{k=1}^m\optset[k]}$ and $\altoptset=\cset{\bopt[{\altopt[1:n]}]}{\altopt[1:n]\in\times_{\ell=1}^n\altoptset[\ell]}$, and therefore
\begin{align*}
\altoptsettoo
&=\cset{\lambda_{\opt,\altopt}\opt+\mu_{\opt,\altopt}\altopt}{\opt\in\optset,\altopt\in\altoptset}\\
&=\cset[\bigg]{\lambda_{\aopt[{\opt[1:m]}],\bopt[{\altopt[1:n]}]}\aopt[{\opt[1:m]}]
+\mu_{\aopt[{\opt[1:m]}],\bopt[{\altopt[1:n]}]}\bopt[{\altopt[1:n]}]}
{\opt[1:m]\in\times_{k=1}^m\optset[k],\altopt[1:n]\in\times_{\ell=1}^n\altoptset[\ell]}.
\end{align*}
If we now introduce the notations
\begin{equation*}
\altoptsettoo[i]
\coloneqq
\begin{cases}
\optset[i] 
&\text{ if }1\leq i\leq m\\
\altoptset[i-m] 
&\text{ if }m+1\leq i\leq m+n
\end{cases}
\end{equation*}
and for any $\altopttoo[1:m+n]\in\times_{i=1}^{m+n}\altoptsettoo[i]$,
\begin{equation*}
\kappa_i^{\altopttoo[1:m+n]}
\coloneqq
\begin{cases}
\lambda_{\aopt[{\altopttoo[1:m]}],\bopt[{\altopttoo[m+1:m+n]}]}\lambda_i^{\altopttoo[1:m]} &\text{ if }1\leq i\leq m\\
\mu_{a_{\altopttoo[1:m]},b_{\altopttoo[m+1:m+n]}}\mu_{i-m}^{\altopttoo[m+1:m+n]} &\text{ if }m+1\leq i\leq m+n,
\end{cases}
\end{equation*}
where we used $\altopttoo[m+1:m+n]$ to denote the tuple $(\altopttoo[m+1],\dots,\altopttoo[m+n])$, then we find that
\begin{align*}
C
&=\cset[\bigg]{\sum_{i=1}^{m+n}\kappa_i^{\altopttoo[1:m+n]}\altopttoo[i]}
{\altopttoo[1:m+n]\in\times_{i=1}^{m+n}\altoptsettoo[i]}.
\end{align*}
Furthermore, since $(\lambda_{\aopt[{\altopttoo[1:m]}],\bopt[{\altopttoo[m+1:m+n]}]},\mu_{\aopt[{\altopttoo[1:m]}],\bopt[{\altopttoo[m+1:m+n]}]})>0$, $\lambda_{1:m}^{\altopttoo[1:m]}>0$ and $\mu_{1:n}^{\altopttoo[m+1:m+n]}>0$, it follows that also
\begin{equation*}
\kappa_{1:m+1}^{\altopttoo[1:m+n]}\coloneqq(\kappa_{1}^{\altopttoo[1:m+n]},\dots,\kappa_{m+1}^{\altopttoo[1:m+n]})>0.
\end{equation*}
Hence, we find that, indeed, $C\in\setposi(K)$.
\qed
\end{proof}

\begin{proposition}\label{prop:adding:supersets}
Consider any set of desirable gamble sets $\rejectset\in\rejectsets$.
Then\/ $\SU\group{\rejectset}$ satisfies Axiom~\ref{ax:rejects:mono}.
Moreover, if $\rejectset$ satisfies Axioms~\ref{ax:rejects:nonempty}, \ref{ax:rejects:pos} and\/~\ref{ax:rejects:cone} and does not contain~$\set{0}$, then so does\/ $\SU\group{\rejectset}$.
\end{proposition}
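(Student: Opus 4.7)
The plan is to unpack the definition of $\SU$ and handle each axiom separately. Throughout, recall that $\optset\in\SU\group{\rejectset}$ precisely when there exists some $\altoptset\in\rejectset$ with $\altoptset\subseteq\optset$, so $\rejectset\subseteq\SU\group{\rejectset}$ and the elements of $\SU\group{\rejectset}$ are exactly the elements of $\rejectset$ together with their $\optsets$-supersets.

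For Axiom~\ref{ax:rejects:mono}, take $\optset[1]\in\SU\group{\rejectset}$ and $\optset[2]\in\optsets$ with $\optset[1]\subseteq\optset[2]$; the witness $\altoptset\in\rejectset$ satisfying $\altoptset\subseteq\optset[1]$ then also satisfies $\altoptset\subseteq\optset[2]$, so $\optset[2]\in\SU\group{\rejectset}$. Now assume that $\rejectset$ satisfies Axioms~\ref{ax:rejects:nonempty}, \ref{ax:rejects:pos} and~\ref{ax:rejects:cone} and that $\set{0}\notin\rejectset$. For Axiom~\ref{ax:rejects:nonempty} and the absence of $\set{0}$: if $\emptyset\in\SU\group{\rejectset}$ or $\set{0}\in\SU\group{\rejectset}$, the witness $\altoptset\in\rejectset$ would have to satisfy $\altoptset\subseteq\emptyset$ or $\altoptset\subseteq\set{0}$, forcing $\altoptset=\emptyset$ or $\altoptset=\set{0}$, either of which contradicts our assumptions on $\rejectset$. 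For Axiom~\ref{ax:rejects:pos}, observe that $\set{\opt}\in\rejectset\subseteq\SU\group{\rejectset}$ for every $\opt\in\posopts$, directly by the corresponding axiom on $\rejectset$.

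The only step requiring any work is Axiom~\ref{ax:rejects:cone}. The idea is to replace the given sets by their witnesses in $\rejectset$, apply $\rejectset$'s cone axiom there, and then use monotonicity. In detail: consider $\optset[1],\optset[2]\in\SU\group{\rejectset}$ together with coefficients $(\lambda_{\opt,\altopt},\mu_{\opt,\altopt})>0$ for all $\opt\in\optset[1]$ and $\altopt\in\optset[2]$, and pick witnesses $\altoptset[1],\altoptset[2]\in\rejectset$ with $\altoptset[1]\subseteq\optset[1]$ and $\altoptset[2]\subseteq\optset[2]$. The coefficients restrict to a choice on $\altoptset[1]\times\altoptset[2]$, so Axiom~\ref{ax:rejects:cone} applied to $\rejectset$ yields
\begin{equation*}
\altoptsettoo\coloneqq\cset{\lambda_{\opt,\altopt}\opt+\mu_{\opt,\altopt}\altopt}{\opt\in\altoptset[1],\altopt\in\altoptset[2]}\in\rejectset.
\end{equation*}
Since $\altoptset[1]\subseteq\optset[1]$ and $\altoptset[2]\subseteq\optset[2]$, we have $\altoptsettoo\subseteq\cset{\lambda_{\opt,\altopt}\opt+\mu_{\opt,\altopt}\altopt}{\opt\in\optset[1],\altopt\in\optset[2]}$, so the latter set belongs to $\SU\group{\rejectset}$ by definition.

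I do not expect any genuine obstacle here: the whole argument is essentially bookkeeping with the definition of $\SU$, and the mild subtlety at Axiom~\ref{ax:rejects:cone}—that the coefficients are indexed over $\optset[1]\times\optset[2]$ rather than $\altoptset[1]\times\altoptset[2]$—is resolved by simply restricting them to the witness sets before invoking the cone axiom on $\rejectset$.
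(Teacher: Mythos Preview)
Your proof is correct and follows essentially the same approach as the paper's: both establish Axiom~\ref{ax:rejects:mono} directly from the definition of $\SU$, obtain Axioms~\ref{ax:rejects:nonempty}, \ref{ax:rejects:pos} and the exclusion of $\set{0}$ by noting that any witness in $\rejectset$ for $\emptyset$ or $\set{0}$ must itself be $\emptyset$ or $\set{0}$, and handle Axiom~\ref{ax:rejects:cone} by restricting the coefficients to the witness sets $\altoptset[1]\times\altoptset[2]$ and applying the cone axiom in $\rejectset$.
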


\begin{proof}
For the first statement, consider any $\optset[1]\in\SU\group{\rejectset}$ and any $\optset[2]\in\optsets$ such that $\optset[1]\subseteq\optset[2]$.
Then there is some $\altoptset[1]\in\rejectset$ such that $\altoptset[1]\subseteq\optset[1]$, and therefore also $\altoptset[1]\subseteq\optset[2]$, whence indeed $\optset[2]\in\SU\group{\rejectset}$.

For the second statement, assume that $\rejectset$ satisfies Axioms~\ref{ax:rejects:nonempty}, \ref{ax:rejects:pos} and~\ref{ax:rejects:cone} and does not contain $\set{0}$.


To prove that $\SU\group{\rejectset}$ satisfies Axiom~\ref{ax:rejects:pos}, simply observe that the operator $\SU$ never removes gamble sets from a set of desirable gamble sets, so the gamble sets $\set{\opt}$, $\opt\in\posopts$, that belong to $\rejectset$ by Axiom~\ref{ax:rejects:pos}, will also belong to the larger $\SU\group{\rejectset}$.

To prove that $\SU\group{\rejectset}$ satisfies Axiom~\ref{ax:rejects:cone}, consider any $\optset[1],\optset[2]\in\SU\group{\rejectset}$, meaning that there are $\altoptset[1],\altoptset[2]\in\rejectset$ such that $\altoptset[1]\subseteq\optset[1]$ and $\altoptset[2]\subseteq\optset[2]$.
For all $\opt\in\optset[1]$ and $\altopt\in\optset[2]$, choose some $(\lambda_{\opt,\altopt},\mu_{\opt,\altopt})>0$, and let 
\begin{equation*}
\optset
\coloneqq\cset{\lambda_{\opt,\altopt}\opt+\mu_{\opt,\altopt}\altopt}{\opt\in\optset[1],\altopt\in\optset[2]}.
\end{equation*}
We then need to prove that $\optset\in\SU\group{\rejectset}$. 
Since $\rejectset$ satisfies Axiom~\ref{ax:rejects:cone}, we infer from $\altoptset[1],\altoptset[2]\in\rejectset$ that
\begin{equation*}
\altoptset
\coloneqq\cset{\lambda_{\opt,\altopt}\opt+\mu_{\opt,\altopt}\altopt}
{\opt\in\altoptset[1],\altopt\in\altoptset[2]}\in\rejectset.
\end{equation*}
Since $\altoptset\subseteq\optset$, this implies that, indeed, $\optset\in\SU\group{\rejectset}$.

Finally, to prove that $\set{0}\notin\SU\group{\rejectset}$ and that $\SU\group{\rejectset}$ satisfies Axiom~\ref{ax:rejects:nonempty}, assume \emph{ex absurdo} that $\set{0}\in\SU\group{\rejectset}$ or $\emptyset\in\SU\group{\rejectset}$. Then $\set{0}\in\rejectset$ or $\emptyset\in\rejectset$. In either case, we obtain a contradiction with the assumption that $\rejectset$ satisfies Axiom~\ref{ax:rejects:nonempty} and does not contain $\set{0}$.
\qed
\end{proof}

\begin{proposition}\label{prop:ax:rejects:cone:equivalents}
Consider any coherent set of desirable gamble sets $\rejectset\in\cohrejectsets$, then $\setposi\group{\rejectset}=\rejectset$.
\end{proposition}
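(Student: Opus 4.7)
The plan is to establish $\setposi(\rejectset) = \rejectset$ by proving both inclusions. The forward inclusion $\rejectset \subseteq \setposi(\rejectset)$ is immediate from the definition: for any $\optset \in \rejectset$, taking $n = 1$, $\optset[1] = \optset$, and $\lambda_1^{\opt[1]} = 1$ for every $\opt[1] \in \optset[1]$ gives $\optset = \set{1 \cdot \opt[1] \colon \opt[1] \in \optset[1]} \in \setposi(\rejectset)$.

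For the reverse inclusion $\setposi(\rejectset) \subseteq \rejectset$, I proceed by induction on $n$. Fix any $\optset \in \setposi(\rejectset)$ and write $\optset = \set*{\sum_{k=1}^n \lambda_k^{\opt[1:n]} \opt[k] \colon \opt[1:n] \in \times_{k=1}^n \optset[k]}$ with each $\optset[k] \in \rejectset$ and $\lambda_{1:n}^{\opt[1:n]} > 0$ for every $\opt[1:n]$. For the base case $n = 1$, fix any $\altopttoo \in \posopts$, so that $\set{\altopttoo} \in \rejectset$ by Axiom~\ref{ax:rejects:pos}, and apply Axiom~\ref{ax:rejects:cone} to $\optset[1]$ and $\set{\altopttoo}$ with coefficient pair $(\lambda_{\opt[1], \altopttoo}, \mu_{\opt[1], \altopttoo}) \coloneqq (\lambda_1^{\opt[1]}, 0)$; each such pair is strictly positive because $\lambda_1^{\opt[1]} > 0$, and the resulting set equals $\optset$. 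For $n = 2$, $\optset$ has precisely the form produced by a single application of Axiom~\ref{ax:rejects:cone} to $\optset[1], \optset[2] \in \rejectset$, so $\optset \in \rejectset$.

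The crux is the inductive step $n \geq 3$. The central difficulty is that $\lambda_k^{\opt[1:n]}$ can depend on \emph{all} $n$ indices simultaneously, so the naive strategy of first applying Axiom~\ref{ax:rejects:cone} to some pair $\optset[i], \optset[j]$ and then iterating is insufficient: the coefficient ratios of the first pair become locked in and cannot depend on the remaining indices, whereas the target ratios $\lambda_i^{\opt[1:n]} / \lambda_j^{\opt[1:n]}$ generally do. My strategy is to slice by fixing an arbitrary $\altopttoo \in \optset[n]$ and aiming to show that the subset $\optset' \coloneqq \set*{\sum_{k=1}^{n-1} \lambda_k^{\opt[1:n-1], \altopttoo} \opt[k] + \lambda_n^{\opt[1:n-1], \altopttoo} \altopttoo \colon \opt[1:n-1] \in \times_{k=1}^{n-1} \optset[k]} \subseteq \optset$ belongs to $\rejectset$; Axiom~\ref{ax:rejects:mono} then gives $\optset \in \rejectset$. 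To construct $\optset' \in \rejectset$, I would apply the inductive hypothesis to $\optset[1], \ldots, \optset[n-1]$ with the restricted coefficients $\lambda_{1:n-1}^{\cdot, \altopttoo}$ (padding with a positive gamble whenever these restricted coefficients degenerate on some $\opt[1:n-1]$) to obtain an auxiliary $B \in \rejectset$, and then combine $B$ with $\set{\altopttoo, \altopt} \in \rejectset$ (for some $\altopt \in \posopts$, with membership following from Axioms~\ref{ax:rejects:pos} and~\ref{ax:rejects:mono}) via Axiom~\ref{ax:rejects:cone} with coefficients tuned to produce $\optset'$ on the $\altopttoo$-indexed side.

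The main technical obstacles are: (i) handling degenerate coefficient tuples where $\lambda_{1:n-1}^{\opt[1:n-1], \altopttoo}$ is the zero vector (which forces $\lambda_n^{\opt[1:n-1], \altopttoo} > 0$) while maintaining the positivity condition $(\lambda, \mu) > 0$ required by both the inductive hypothesis and Axiom~\ref{ax:rejects:cone}; and (ii) ensuring that the final combination yields a subset of $\optset$, rather than a proper superset containing extraneous elements from the $\altopt$-side of the combination, which will likely require a more refined intermediate construction than the simple padding with $\set{\altopttoo, \altopt}$.
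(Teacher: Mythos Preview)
Your forward inclusion and the base cases $n=1,2$ are correct. The gap is the inductive step for $n\geq3$: you have laid out a plan and honestly flagged its obstacles, but you have not resolved them, and obstacle~(ii) in particular is a genuine barrier to the slicing strategy. When you combine your auxiliary $B$ with $\{\altopttoo,\altopt\}$ (or with $\optset[n]$) via Axiom~\ref{ax:rejects:cone}, the resulting set necessarily contains, for every $b\in B$, an element of the form $\alpha_{b,y}b+\beta_{b,y}y$ for \emph{each} $y$ in the second set. On the $\altopttoo$-side you can tune coefficients to land in $A'\subseteq\optset$, but on the $y\neq\altopttoo$ side you would need each $\alpha b+\beta y$ to lie in the \emph{finite} set $\optset$; since $b$ is built from coefficients $\lambda_{1:n-1}^{\cdot,\altopttoo}$ that bear no relation to $\lambda_{1:n-1}^{\cdot,y}$, there is no reason any positive combination of $b$ and $y$ should hit $\optset$. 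This is not a technicality to be patched by padding: the coefficient ratios among the first $n-1$ coordinates get locked in when you form $B$, and iterated applications of~\ref{ax:rejects:cone} cannot undo that dependence. A direct induction along these lines, if it can be made to work at all, would require a substantially more elaborate construction than what you sketch.

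The paper sidesteps all of this by invoking the representation result (Theorem~\ref{theo:rejectsets:representation}): since $\rejectset=\bigcap\{\rejectset[\desirset]:\desirset\in\cohdesirsets(\rejectset)\}$, it suffices to show $\setposi(\rejectset)\subseteq\rejectset[\desirset]$ for each coherent $\desirset$ with $\rejectset\subseteq\rejectset[\desirset]$. Given $\optset\in\setposi(\rejectset)$, each $\optset[k]\in\rejectset\subseteq\rejectset[\desirset]$ meets $\desirset$, so one can pick $\altopt[k]\in\optset[k]\cap\desirset$; then $\sum_k\lambda_k^{\altopt[1:n]}\altopt[k]\in\desirset$ by~\ref{ax:desirs:cone}, and this sum lies in $\optset$, giving $\optset\in\rejectset[\desirset]$. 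Working with the single coherent $\desirset$ lets one pick \emph{one} tuple $\altopt[1:n]$ and thereby avoid the combinatorial explosion that defeats your induction.
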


\begin{proof}
That $\rejectset\subseteq\setposi\group{\rejectset}$, is an immediate consequence of the definition of the $\setposi$ operator, and holds for any set of desirable gamble sets, coherent or not. 
Indeed, consider any $\optset\in\rejectset$, then it is not difficult to see that $\optset\in\setposi\group{\rejectset}$: choose $n\coloneqq1$, $\optset[1]\coloneqq\optset\in\rejectset^1$, and $\lambda^{\opt[1:1]}_{1:1}\coloneqq1$ for all $\opt[1:1]\in\times_{k=1}^1\optset[1]=\optset$ in the definition of the $\setposi$ operator.

For the converse inclusion, that $\setposi\group{\rejectset}\subseteq\rejectset$, we use the coherence of $\rejectset$, and in particular the representation result of Theorem~\ref{theo:rejectsets:representation}, which allows us to write that $\rejectset=\bigcap\cset{\rejectset[\desirset]}{\desirset\in\cohdesirsets\text{ and }\rejectset\subseteq\rejectset[\desirset]}$.

So, if we fix any $\desirset\in\cohdesirsets$ such that $\rejectset\subseteq\rejectset[\desirset]$, then it clearly suffices to prove that also $\setposi\group{\rejectset}\subseteq\rejectset[\desirset]$.
Consider, therefore, any $\optset\in\setposi(\rejectset)$, meaning that there are $n\in\naturals$, $(\optset[1],\dots,\optset[n])\in\rejectset^n$ and, for all $\opt[1:n]\in\times_{k=1}^n\optset[k]$, some choice of $\lambda_{1:n}^{\opt[1:n]}>0$ such that
\begin{equation*}
\optset=\cset[\bigg]{\sum_{k=1}^n\lambda_k^{\opt[1:n]}\opt[k]}{\opt[1:n]\in\times_{k=1}^n\optset[k]}.
\end{equation*} 
For any $k\in\set{1,\dots,n}$, since $\optset[k]\in\rejectset\subseteq\rejectset[\desirset]$, we know that $\optset[k]\cap\desirset\neq\emptyset$, so we can fix some $\altopt[k]\in\optset[k]\cap\desirset$. 
Then, on the one hand, we see that $\sum_{k=1}^n\lambda_k^{\altopt[1:n]}\altopt[k]\in\optset$.
On the other hand, since $\lambda_{1:n}^{\altopt[1:n]}>0$, we infer from Axiom~\ref{ax:desirs:cone} [by applying it multiple times] that also $\sum_{k=1}^n\lambda_k^{\altopt[1:n]}\altopt[k]\in\desirset$. Therefore, we find that $\optset\cap\desirset\neq\emptyset$, or equivalently, that $\optset\in\rejectset[\desirset]$.
Since $\optset\in\setposi(\rejectset)$ was chosen arbitrarily, it follows that, indeed, $\setposi\group{\rejectset}\subseteq\rejectset[\desirset]$.
\qed
\end{proof}

\begin{theorem}\label{theo:rejectsets:consistency}
An assessment $\assessment\subseteq\optsets$ is consistent if and only if\/ $\emptyset\notin\assessment$ and\/ $\set{0}\notin\setposi\group{\singposopts\cup\assessment}$.
\end{theorem}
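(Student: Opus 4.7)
The plan is to prove both implications, with the backward direction constructing the candidate that will turn out to be the natural extension of Theorem~\ref{theo:rejectsets:consistency:and:natex}.

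For the forward direction, suppose $\assessment$ is consistent, so there is some $\rejectset\in\cohrejectsets$ with $\assessment\subseteq\rejectset$. First, $\emptyset\notin\rejectset$ by Axiom~\ref{ax:rejects:nonempty}, so $\emptyset\notin\assessment$. Next, Axiom~\ref{ax:rejects:pos} gives $\singposopts\subseteq\rejectset$, so $\singposopts\cup\assessment\subseteq\rejectset$, and since $\setposi$ is monotone in its argument, $\setposi\group{\singposopts\cup\assessment}\subseteq\setposi\group{\rejectset}=\rejectset$ by Proposition~\ref{prop:ax:rejects:cone:equivalents}. If $\set{0}$ belonged to $\setposi\group{\singposopts\cup\assessment}$, then $\set{0}\in\rejectset$, and Axiom~\ref{ax:rejects:removezero} would give $\emptyset\in\rejectset$, contradicting Axiom~\ref{ax:rejects:nonempty}.

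For the backward direction, assume $\emptyset\notin\assessment$ and $\set{0}\notin\setposi\group{\singposopts\cup\assessment}$. I propose to exhibit the coherent dominating set
\begin{equation*}
\rejectset^{*}\coloneqq\RN\group{\SU\group{\setposi\group{\singposopts\cup\assessment}}},
\end{equation*}
which by Lemma~\ref{lem:combineoperators} equals $\RS\group{\setposi\group{\singposopts\cup\assessment}}$. Write $\rejectset[0]\coloneqq\setposi\group{\singposopts\cup\assessment}$. I would first check that $\rejectset[0]$ trivially contains $\singposopts\cup\assessment$ (take $n=1$ and the coefficient equal to~$1$ in the definition of~$\setposi$), hence $\rejectset[0]$ satisfies Axiom~\ref{ax:rejects:pos}. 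It satisfies Axiom~\ref{ax:rejects:cone} by Proposition~\ref{prop:applying:posi}. It also satisfies Axiom~\ref{ax:rejects:nonempty}: indeed, from $\emptyset\notin\singposopts\cup\assessment$ any Cartesian product $\times_{k=1}^{n}\optset[k]$ in the definition of $\setposi$ is non-empty, so every gamble set produced is non-empty. Finally $\set{0}\notin\rejectset[0]$ by hypothesis.

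I would then chain Propositions~\ref{prop:adding:supersets} and~\ref{prop:removal:of:nonpositives}: $\SU\group{\rejectset[0]}$ satisfies Axioms~\ref{ax:rejects:nonempty}, \ref{ax:rejects:pos}, \ref{ax:rejects:cone} and~\ref{ax:rejects:mono} and still does not contain $\set{0}$; applying $\RN$ on top then gives a set that satisfies Axiom~\ref{ax:rejects:removezero} as well, hence all of Axioms~\ref{ax:rejects:nonempty}--\ref{ax:rejects:mono}. So $\rejectset^{*}\in\cohrejectsets$. Containment $\assessment\subseteq\rejectset^{*}$ is immediate because each of $\setposi$, $\SU$, $\RN$ only enlarges its argument (for $\RN$, every $\altoptset$ trivially satisfies $\altoptset\setminus\nonposopts\subseteq\altoptset\subseteq\altoptset$), so $\assessment\subseteq\rejectset[0]\subseteq\SU\group{\rejectset[0]}\subseteq\RN\group{\SU\group{\rejectset[0]}}=\rejectset^{*}$, proving consistency. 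The main delicate point is ensuring that the non-membership of $\set{0}$ and $\emptyset$ is preserved through each of the three operators, which is precisely what the cited propositions are designed to guarantee once the input has the required partial coherence. \qed
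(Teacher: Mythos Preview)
Your proof is correct and follows essentially the same route as the paper's: both directions use the same ingredients (Proposition~\ref{prop:ax:rejects:cone:equivalents} for the forward implication, and the chain $\setposi\to\SU\to\RN$ together with Propositions~\ref{prop:applying:posi}, \ref{prop:adding:supersets} and~\ref{prop:removal:of:nonpositives} for the backward one), with only cosmetic differences in presentation and order.
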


\begin{proof}
For notational simplicity, we will denote the set of desirable gamble sets $\setposi\group{\singposopts\cup\assessment}$ by $\rejectset[o]$.

First, assume that $\emptyset\notin\assessment$ and $\set{0}\notin\rejectset[o]$.
Observe that $\rejectset[o]$ satisfies Axiom~\ref{ax:rejects:pos} by construction and Axiom~\ref{ax:rejects:cone} by Proposition~\ref{prop:applying:posi}. Furthermore, $\emptyset\notin\assessment$ implies that $\emptyset\notin\rejectset[o]$, and therefore, that $\rejectset[o]$ satisfies Axiom~\ref{ax:rejects:nonempty}. 
Since $\set{0}\notin\rejectset[o]$ by assumption, it therefore follows from Proposition~\ref{prop:adding:supersets} that $\SU\group{\rejectset[o]}$ satisfies Axioms~\ref{ax:rejects:nonempty}, \ref{ax:rejects:pos}, \ref{ax:rejects:cone} and~\ref{ax:rejects:mono}, and that $\set{0}\notin\SU\group{\rejectset[o]}$, so we gather from Proposition~\ref{prop:removal:of:nonpositives} that $\rejectset[1]\coloneqq\RN\group{\SU\group{\rejectset[o]}}$ satisfies Axioms~\ref{ax:rejects:nonempty}—-\ref{ax:rejects:mono}.
Since $\rejectset[1]$ includes $\assessment$ [none of the operators $\setposi$, $\SU$ and $\RN$ remove gamble sets from their arguments, they only add new gamble sets], this implies that $\rejectset[1]\in\cohrejectsets(\assessment)$, and therefore, that $\assessment$ is consistent.

Next, assume that $\assessment$ is consistent, which means that $\cohrejectsets(\assessment)\neq\emptyset$. Consider any $\rejectset\in \cohrejectsets(\assessment)$, which means that $\rejectset$ is coherent and $\assessment\subseteq\rejectset$. Then $\singposopts\cup\assessment\subseteq\rejectset$ [use Axiom~\ref{ax:rejects:pos}] and therefore also $\rejectset[o]=\setposi\group{\singposopts\cup\assessment}\subseteq\setposi\group{\rejectset}=\rejectset$ [for the inclusion, use the definition of the $\setposi$ operator, and for the equality, use Proposition~\ref{prop:ax:rejects:cone:equivalents}]. Now assume \emph{ex absurdo} that $\set{0}\in\rejectset[o]$. Then also $\set{0}\in\rejectset$ and therefore, Axiom~\ref{ax:rejects:removezero} implies that $\emptyset\in\rejectset$, which is impossible because of Axiom~\ref{ax:rejects:nonempty}. Hence, we find that $\set{0}\notin\rejectset[o]$. Finally, since $\rejectset$ is coherent, Axiom~\ref{ax:rejects:nonempty} implies that $\emptyset\notin\rejectset$, which, since $\assessment\subseteq\rejectset$, implies that $\emptyset\notin\assessment$.
\qed
\end{proof}

\begin{proposition}\label{prop:ax:rejects:SU:equivalents}
Consider any coherent set of desirable gamble sets $\rejectset\in\cohrejectsets$, then $\SU\group{\rejectset}=\rejectset$.
\end{proposition}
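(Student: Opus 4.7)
The plan is to prove the two inclusions $\rejectset \subseteq \SU(\rejectset)$ and $\SU(\rejectset) \subseteq \rejectset$ separately, and both are essentially immediate from the definitions — no real obstacle is expected here, since this proposition parallels Proposition~\ref{prop:ax:rejects:RN:equivalents} but uses only monotonicity rather than the more subtle replacement lemmas.

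For the forward inclusion $\rejectset \subseteq \SU(\rejectset)$, I would take an arbitrary $\optset \in \rejectset$ and simply note that $\optset \subseteq \optset$ witnesses $\optset \in \SU(\rejectset)$ via the definition of $\SU$. This direction does not use coherence at all and holds for any $\rejectset \in \rejectsets$.

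For the reverse inclusion $\SU(\rejectset) \subseteq \rejectset$, I would take an arbitrary $\optset \in \SU(\rejectset)$ and unpack the definition to obtain some $\altoptset \in \rejectset$ with $\altoptset \subseteq \optset$. Since $\rejectset$ is coherent, it satisfies Axiom~\ref{ax:rejects:mono}, and applying this axiom to $\altoptset \in \rejectset$ and $\altoptset \subseteq \optset$ immediately yields $\optset \in \rejectset$. Combining both inclusions gives $\SU(\rejectset) = \rejectset$, as required. The only ingredient beyond the definitions is Axiom~\ref{ax:rejects:mono}, which is precisely the axiom encoding the monotonicity property that $\SU$ enforces.
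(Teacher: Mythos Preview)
Your proof is correct and follows exactly the same approach as the paper's: the inclusion $\rejectset\subseteq\SU(\rejectset)$ from the definition of $\SU$, and the reverse inclusion from Axiom~\ref{ax:rejects:mono}. You have simply spelled out the details a bit more than the paper does.
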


\begin{proof}
That $\rejectset\subseteq\SU\group{\rejectset}$ is an immediate consequence of the definition of the $\SU$ operator. 
The converse inclusion follows from the fact that $\rejectset$ is coherent and therefore satisfies Axiom~\ref{ax:rejects:mono}.
\qed
\end{proof}

\begin{theorem}\label{theo:rejectsets:natex}
For a consistent assessment $\assessment\subseteq\optsets$, $\natexrejectset(\assessment)=\RS\group{\setposi\group{\singposopts\cup\assessment}}$.
\end{theorem}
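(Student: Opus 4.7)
The plan is to prove both inclusions separately, using the already-established machinery. Let me write $\rejectset[o] \coloneqq \setposi\group{\singposopts\cup\assessment}$ and $\rejectset[1]\coloneqq\RS\group{\rejectset[o]}$ for brevity, and recall from Lemma~\ref{lem:combineoperators} that $\rejectset[1] = \RN\group{\SU\group{\rejectset[o]}}$.

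For the inclusion $\natexrejectset(\assessment)\subseteq\rejectset[1]$, I would simply observe that the proof of Theorem~\ref{theo:rejectsets:consistency} already establishes, under the assumption that $\assessment$ is consistent, that $\rejectset[1]$ is a coherent set of desirable gamble sets containing $\assessment$. Hence $\rejectset[1]\in\cohrejectsets(\assessment)$, and since $\natexrejectset(\assessment)$ is by definition the intersection of all such coherent sets, it follows that $\natexrejectset(\assessment)\subseteq\rejectset[1]$.

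For the converse inclusion $\rejectset[1]\subseteq\natexrejectset(\assessment)$, I would fix an arbitrary $\rejectset\in\cohrejectsets(\assessment)$ and show $\rejectset[1]\subseteq\rejectset$; the conclusion then follows by intersecting over all such $\rejectset$. Since $\rejectset$ is coherent, Axiom~\ref{ax:rejects:pos} yields $\singposopts\subseteq\rejectset$, and combined with $\assessment\subseteq\rejectset$, we get $\singposopts\cup\assessment\subseteq\rejectset$. The three fixed-point identities for coherent sets then chain together cleanly: first Proposition~\ref{prop:ax:rejects:cone:equivalents} gives $\setposi\group{\singposopts\cup\assessment}\subseteq\setposi\group{\rejectset}=\rejectset$, i.e.\ $\rejectset[o]\subseteq\rejectset$; then Proposition~\ref{prop:ax:rejects:SU:equivalents} gives $\SU\group{\rejectset[o]}\subseteq\SU\group{\rejectset}=\rejectset$; and finally Proposition~\ref{prop:ax:rejects:RN:equivalents} gives $\RN\group{\SU\group{\rejectset[o]}}\subseteq\RN\group{\rejectset}=\rejectset$, which by Lemma~\ref{lem:combineoperators} is $\rejectset[1]\subseteq\rejectset$.

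Given the groundwork already laid, there is no real obstacle: all the hard combinatorial work lives in Proposition~\ref{prop:Kstarstar} and the representation Theorem~\ref{theo:rejectsets:representation} that underlies Proposition~\ref{prop:ax:rejects:cone:equivalents}. The only thing to be slightly careful about is citing the consistency half for the first inclusion, rather than redoing the argument that $\RS\group{\setposi\group{\singposopts\cup\assessment}}$ is coherent and extends $\assessment$; this is why the theorem presumes consistency of $\assessment$, which guarantees $\set{0}\notin\rejectset[o]$ and lets us invoke Propositions~\ref{prop:adding:supersets} and~\ref{prop:removal:of:nonpositives} via the proof of Theorem~\ref{theo:rejectsets:consistency}.
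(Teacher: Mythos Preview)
Your proposal is correct and essentially identical to the paper's proof: both inclusions are established exactly as you describe, via Lemma~\ref{lem:combineoperators}, the proof of Theorem~\ref{theo:rejectsets:consistency} for $\natexrejectset(\assessment)\subseteq\rejectset[1]$, and the chain of fixed-point Propositions~\ref{prop:ax:rejects:cone:equivalents}, \ref{prop:ax:rejects:SU:equivalents}, \ref{prop:ax:rejects:RN:equivalents} together with monotonicity of the operators for the reverse inclusion. The only cosmetic difference is that where you spell out the $\setposi$ step explicitly, the paper cites back to the proof of Theorem~\ref{theo:rejectsets:consistency}, which contains that very argument.
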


\begin{proof}
Assume that $\assessment$ is consistent. Theorem~\ref{theo:rejectsets:consistency} then tells us that $\emptyset\notin\assessment$ and $\set{0}\notin\rejectset[o]\coloneqq\setposi\group{\singposopts\cup\assessment}$.
We have to prove that $\natexrejectset(\assessment)=\RS\group{\rejectset[o]}$, or equivalently, due to Lemma~\ref{lem:combineoperators}, that $\natexrejectset(\assessment)=\rejectset[1]\coloneqq\RN\group{\SU\group{\rejectset[o]}}$.

We already know from the proof of Theorem~\ref{theo:rejectsets:consistency} that $\emptyset\notin\assessment$ and $\set{0}\notin\rejectset[o]$ implies that $\rejectset[1]\in\cohrejectsets(\assessment)$, and therefore also $\natexrejectset(\assessment)\subseteq\rejectset[1]$.
To prove the converse inclusion, consider any $\rejectset\in\cohrejectsets(\assessment)$.
Then as shown in the proof of Theorem~\ref{theo:rejectsets:consistency}, $\rejectset[o]\subseteq\rejectset$.
Hence also $\SU\group{\rejectset[o]}\subseteq\SU\group{\rejectset}=\rejectset$ [for the inclusion, use the definition of the $\SU$ operator, and for the equality, use Proposition~\ref{prop:ax:rejects:SU:equivalents}], and therefore also $\rejectset[1]=\RN\group{\SU\group{\rejectset[o]}}\subseteq\RN\group{\rejectset}=\rejectset$ [for the inclusion, use the definition of the $\RN$ operator, and for the equality, use Proposition~\ref{prop:ax:rejects:RN:equivalents}].
So $\rejectset[1]\subseteq\rejectset$. Since this is true for every $\rejectset\in\cohrejectsets(\assessment)$, and since the consistency of $\assessment$ implies that $\cohrejectsets(\assessment)\neq\emptyset$, we conclude that $\rejectset[1]\subseteq\natexrejectset(\assessment)$.
\qed
\end{proof}

\begin{proof}[Theorem~\ref{theo:rejectsets:consistency:and:natex}]
Immediately from Theorems~\ref{theo:rejectsets:consistency} and~\ref{theo:rejectsets:natex}.
\qed
\end{proof}

\end{document}